\setlist[enumerate]{leftmargin=.5in}
\setlist[itemize]{leftmargin=.5in}
\crefname{hypothesis}{Hypothesis}{Hypotheses}
\DeclareMathOperator*{\argmin}{arg\, min}
\DeclareMathOperator*{\col}{col}
\DeclareMathOperator*{\reshape}{reshape}
\DeclareMathOperator*{\myVec}{vec}
\DeclareMathOperator*{\trank}{t-rank}
\DeclareMathOperator*{\st}{s.t.}
\def\mydefb#1{\expandafter\def\csname bf#1\endcsname{\mathbf{#1}}}
\def\mydefallb#1{\ifx#1\mydefallb\else\mydefb#1\expandafter\mydefallb\fi}
\def\mydefb#1{\expandafter\def\csname #1bb\endcsname{\mathbb{#1}}}
\def\mydefallb#1{\ifx#1\mydefallb\else\mydefb#1\expandafter\mydefallb\fi}
\def\mydefb#1{\expandafter\def\csname #1cal\endcsname{\boldsymbol{\mathcal{#1}}}}
\def\mydefallb#1{\ifx#1\mydefallb\else\mydefb#1\expandafter\mydefallb\fi}
\newcommand{\bfSigma}{{\boldsymbol \Sigma}}
\newcommand{\starM}{{\star_{\rm M}}}
\title{A Tensor SVD-based Classification Algorithm Applied to fMRI Data}
\author{Katherine Keegan\thanks{Department of Mathematics, Mary Baldwin University, Staunton, VA
  (\email{keegank1624@marybaldwin.edu}).}
\and Tanvi Vishwanath\thanks{Department of Mathematics, Texas A \& M University, College Station, TX
  (\email{tanvivishwanath@tamu.edu}).}
\and Yihua Xu\thanks{Department of Mathematics, Georgia Institute of Technology, Atlanta, GA 
  (\email{yxu604@gatech.edu}).}}
\begin{document}

\maketitle

\begin{abstract}
To analyze the abundance of multidimensional data, tensor-based frameworks have been developed. Traditionally, the matrix singular value decomposition (SVD) is  used to extract the most dominant features from a matrix containing the vectorized data. While the SVD is highly useful for data that can be appropriately represented as a matrix, this step of vectorization causes us to lose the high-dimensional relationships intrinsic to the data. To facilitate efficient multidimensional feature extraction, we utilize a projection-based classification algorithm using the t-SVDM, a tensor analog of the matrix SVD. Our work extends the t-SVDM framework and the classification algorithm, both initially proposed for tensors of order 3, to any number of dimensions. We then apply this algorithm to a classification task using the StarPlus fMRI dataset. Our numerical experiments demonstrate that there exists a superior tensor-based approach to fMRI classification than the best possible equivalent matrix-based approach. Our results illustrate the advantages of our chosen tensor framework, provide insight into beneficial choices of parameters, and could be further developed for classification of more complex imaging data. We provide our Python implementation at~\url{https://github.com/elizabethnewman/tensor-fmri}. 
\end{abstract}


\section{Introduction}
\label{sec:introduction}
High-dimensional data  has become increasingly prevalent in various fields such as video, semantic indexing, and  chemistry \cite{KoldaBader2009:tensorBackground}. One such area is medical imaging, in which the high-dimensional data of interest are medical images obtained using a modality such as functional Magnetic Resonance Imaging (fMRI). A single patient's fMRI scan can be thought of as a series of three-dimensional scans over time \cite{ADOLF20113760,CHATZICHRISTOS201917}.

fMRI data can serve an important role in disease detection, with applications ranging from Alzheimer’s to depression~\cite{fmri-information,SpatioTemporalTensorKernel}. Human medical professionals can be trained to interpret fMRI scans and make diagnostic decisions based on visual inspection. Various researchers have investigated the viability of utilizing mathematical methods to automate this classification process. The key to classification algorithms is to extract meaningful features from training data belonging to a certain class. The Singular Value Decomposition (SVD) is one popular method that accomplishes this task successfully in various data analysis contexts. However, it requires that all of the input data is reshaped into a two-dimensional form.

A tensor is a higher-dimensional analog to a matrix. Unlike a matrix in traditional linear algebra, where all information is indexed according to two possible axes (rows and columns), a tensor can have arbitrarily many dimensions. The mathematical foundations of algebra using tensors have been well-developed; \cite{KoldaBader2009:tensorBackground} provides some useful background on this subject. In order to identify dominant features from inherently multidimensional data such as fMRI scans, tensor-based SVD analogs have been developed that allow the data to retain its multidimensional form throughout the classification process, avoiding the pitfalls that occur with vectorizing such data \cite{KernfeldKilmerAeron2015}.

We focus on the t-SVDM framework, a tensor SVD approach proposed in \cite{kilmer2019tensortensor}. This particular framework not only allows for a kind of multiplication between tensors that appears similar to traditional matrix multiplication, but also can provide representations of multidimensional data that are provably better than representing the data in vectorized form as a matrix. In order to determine its viability in more complicated medical diagnostic classification tasks, we apply this framework to a classification task of determining whether or not a human subject is reading a sentence or viewing a picture.  We illustrate that this t-SVDM classification algorithm successfully beats its matrix counterpart. Our results also shed some light on the potential limitations of this method.

In our paper, we illustrate an extension of the t-SVDM framework, describe how it can be used for a classification algorithm based on \cite{8313137}, and we apply this algorithm to the aforementioned fMRI classification task. The main extensions and contributions of our work are the following:
\begin{itemize}
    \item \textbf{Dimension Extension:} The original t-SVDM framework is proposed for only three-dimensions. Our paper provides definitions for a $p$-dimensional tensor and illustrates the usability of this framework for a five-dimensional fMRI dataset.
    \item \textbf{Transformation Choices:} We select the t-SVDM not only for its ability to process high-dimensional data, but also for the flexibility that the framework introduces via the $\starM$-product, which enables one to strategically choose a mathematical transformation based on the nature of the data being analyzed.
    \item \textbf{Algorithm Flexibility:} The t-SVDM and our proposed classification procedure is a mathematically justified framework that can be applied to any labeled high-dimensional data. Thus, all of the methods described in our paper can easily be extended to other similar classification tasks with labeled data. 
    \item \textbf{Region of Interest Implications:} When incorporating knowledge about specific regions of the brain into our classification procedure, we discovered that the most impactful regions vary between human subjects. This could illustrate the anatomical differences between humans when completing cognitive tasks and highlights the challenge of constructing  the difficulty of creating a universal basis that represents all subjects with consistent accuracy.
\end{itemize}

The paper is organized as follows: \Cref{sec:background} describes necessary background notations, the $\starM$-product and the algebraic framework it gives rise to, and the t-SVDM.  \Cref{sec:local} describes the local tensor SVD approaches specifically for classification and provides an intuition example to provide more explanation. \Cref{sec:numericalExperiments} first describes specific choices of transformation matrices, and then examines numerical results from applying the classification algorithm to the StarPlus fMRI dataset. Finally, \Cref{sec:conclusions} concludes the paper and proposes some potential future directions.

\section{Background and Preliminaries}
\label{sec:background}

In this paper, a \emph{tensor}, denoted with a capital caligraphic letter $\Acal$, is a multidimensional array. 
The \emph{order} of a tensor is the number of dimensions it has; if $\Acal$ is a tensor of order-$p$, then the size of $\Acal$ is $n_1\times n_2\times \dots \times n_p$.  
We assume tensors are real-valued; that is, $\Acal\in \Rbb^{n_1\times n_2\times \cdots \times n_p}$.
A \emph{matrix} (order-$2$ tensor) is denoted with a bold capital letter, $\bfA\in \Rbb^{n_1\times n_2}$, a \emph{vector} (order-$1$ tensor) is denoted with a bold lowercase letter, $\bfa\in \Rbb^{n_1}$, and a \emph{scalar} (order-$0$ tensor) is denoted by a lowercase letter, $a\in \Rbb$.
To provide relevant information needed to understand the t-SVDM framework, we will first denote the basic notation and definitions used within this paper. Next, we will introduce tensor products including $\starM$-product. This notation and terminology will then equip us with the tools needed to explain the t-SVDM and the important properties it offers for classification.
For simplicity, we consider real-valued tensors, but the definitions and theory presented can be extended to complex-valued tensors as well.
\subsection{Notation}
\label{sec:notation}

Consider that an $m \times n$ matrix is structured with $m$ rows and $n$ columns.  Using {\sc Matlab} notation, $\bfA(i,:)$ denotes the $i$-th row and $\bfA(:,j)$ denotes the $j$-th column.  We can similarly describe the structure of a tensor. These analogs of matrix concepts to tensors are introduced and developed in  \cite{KilmerMartin2011} and \cite{KernfeldKilmerAeron2015}. We extend these definitions such that they describe any finite-dimensional tensor of order $p$.

\begin{definition}[mode-$k$ fibers]
Fibers are sections of a tensor $\Acal$ such that all but the $k$-th dimension are fixed. 
\end{definition}

\begin{figure}[h]
    \centering
    \begin{tabular}{cccc}
    \multicolumn{4}{c}{\includegraphics[width=0.18\textwidth]{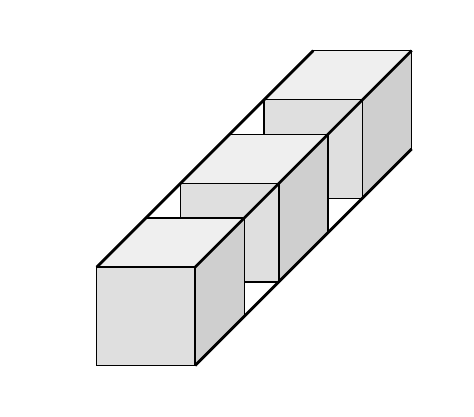}}\\
    \multicolumn{4}{c}{tensor}\\
    \multicolumn{4}{c}{$\Acal$}\\
     \includegraphics[width=0.18\textwidth]{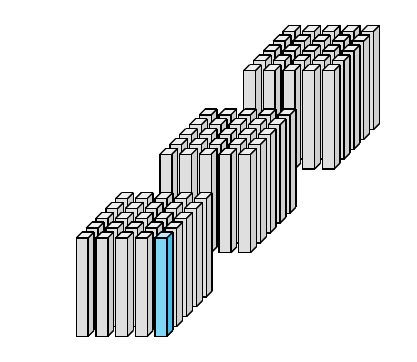}
    & \includegraphics[width=0.18\textwidth]{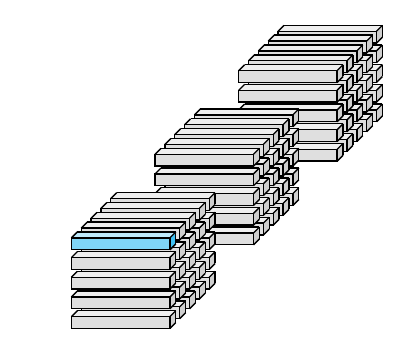}
    & \includegraphics[width=0.18\textwidth]{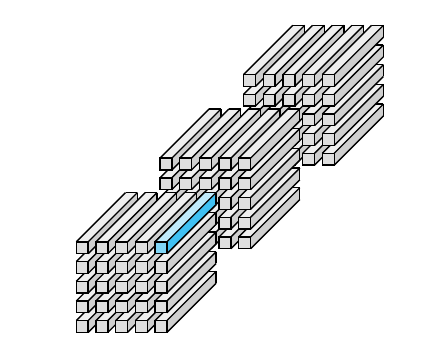}
    & \includegraphics[width=0.18\textwidth]{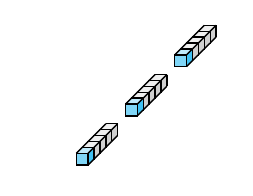}\\
     mode-$1$ fibers & mode-$2$ fibers & mode-$3$ fibers & mode-$4$ fibers \\
     $\Acal(:,i_2,i_3,i_4)$ & $\Acal(i_1,:,i_3,i_4)$ & $\Acal(i_1,i_2,:,i_4)$ & $\Acal(i_1,i_2,i_3,:)$\\
      $\Acal_{:,i_2,i_3,i_4}$ & $\Acal_{i_1,:,i_3,i_4}$ & $\Acal_{i_1,i_2,:,i_4}$ & $\Acal_{i_1,i_2,i_3,:}$\\
    \end{tabular}
    \caption{Visualization of a 4D tensor, its mode-$1$, $2$, $3$, and $4$ fibers. 
    For illustrative purposes, we only show a mode-$4$ fiber for $1\times 1\times n_3\times n_4$ tensor.}
    \label{fig:tensorNotation4D-fiber}
\end{figure}

\begin{definition}[lateral slices, frontal slices, and tubes]\label{def:slices}
For a $p$-dimensional tensor $\Acal$, we denote lateral slices as $\vec{\Acal}_{i_2} = \Acal(:,i_2,:,\dots,:)$, and frontal slices as $\Acal(:,:,i_3,i_4,\dots,i_p)$. 
We denote tubes, or $1\times 1\times n_3\times \dots \times n_p$ tensors, as $\bfa_{i_1,i_2} = \Acal(i_1,i_2,:,\dots,:)$.
\end{definition}

\Cref{fig:tensorNotation4D} visualizes a tensor of order 4, its lateral and frontal slices, and tubes. 

\begin{figure}[h]
    \centering
    \begin{tabular}{ccc}
    \includegraphics[width=0.18\textwidth]{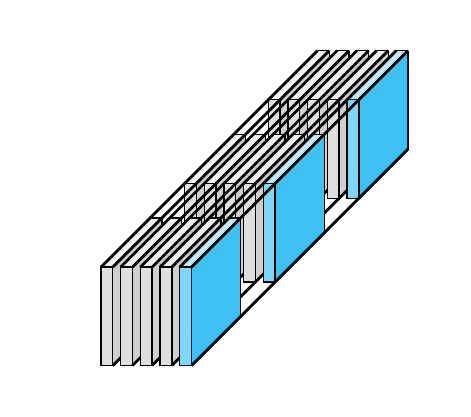}
    & \includegraphics[width=0.18\textwidth]{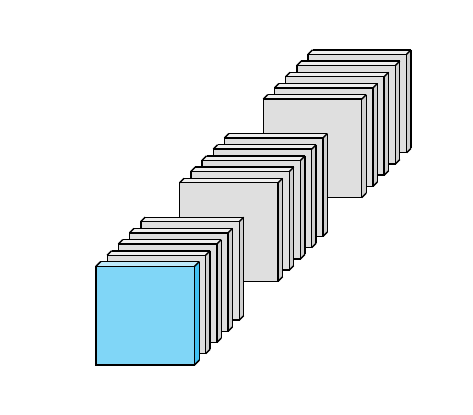}
    & \includegraphics[width=0.18\textwidth]{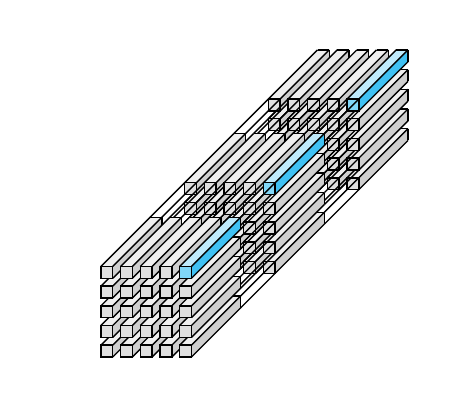}\\
    lateral slices & frontal slices & tubes \\ 
    $\vec{\Acal}_{i_2} = \Acal(:,i_2,:,:)$  & $\Acal(:,:,i_3,i_4)$  & $\Acal(i_1,i_2,:,:)$ \\
    $\Acal_{:,i_2,\dots}$  & $\Acal_{:,:,i_3,i_4}$  & $\bfa_{i_1,i_2}$ \\
    \end{tabular}
    \caption{Visualization of a 4D tensor, its lateral slices, frontal slices, and tubes. Notice that lateral slices fix the second dimension, frontal slices fix all dimensions except the first two, and tubes fix the first two dimensions.}
    \label{fig:tensorNotation4D}
\end{figure}

\begin{definition}[vectorize]\label{def:vectorize}
Matrix vectorization is the process of converting a matrix of values into a column vector. For example, for a matrix $\bfA \in \mathbb{R}^{n_1 \times n_2}$, we have
\begin{align*}
    \myVec(\bfA) &= \myVec\left(\begin{bmatrix} \bfA_{:,1} & \cdots & \bfA_{:,n_2} \end{bmatrix}\right) = \begin{bmatrix}\bfA_{:,1}\\ \vdots \\ \bfA_{:,n_2} \end{bmatrix}
\end{align*}
Tensor vectorization follows a similar recursive pattern.
\begin{align*}
    \myVec(\Acal) &= \begin{bmatrix} \myVec(\Acal_{:,:,1}) \\ \vdots \\ \myVec(\Acal_{:,:,n_3})\end{bmatrix}\text{, where }\Acal \in \Rbb^{n_1\times n_2 \times n_3}\\
    \myVec(\Acal) &= \begin{bmatrix} \myVec(\Acal_{:,\dots,:,1}) \\ \vdots \\ \myVec(\Acal_{:,\dots,:,n_p})\end{bmatrix}\text{, where }\Acal\in \Rbb^{n_1\times n_2\times \cdots \times n_p}
\end{align*}

\end{definition}

Just as multiplication can be accomplished between matrices, one can also define multiplication between a tensor with a matrix. \Cref{def:modekUnfold} and \Cref{def:modekProduct} describe this process. 

\begin{definition}[mode-$k$ unfolding/folding]\label{def:modekUnfold}
A mode-$k$ unfolding of a tensor $\Acal\in\mathbb{R}^{n_1\times\dots\times n_p}$ results in a matrix denoted by $\Acal_{(k)} \in \mathbb{R}^{n_{k} \times (n_{1} \dots n_{k-1} n_{k+1}\dots n_{p})}$ such that the mode-$k$ fibers are the columns of the resultant matrix. A mode-$k$ folding is the reverse of this process. 
\end{definition}

\begin{definition}[mode-$k$ product]\label{def:modekProduct}
The mode-k product of a tensor $\Acal\in\mathbb{R}^{n_1\times\dots\times n_p}$ with a matrix $\bfM\in\mathbb{R}^{d \times n_k}$ results in a tensor whose mode-k unfolding is $\bfM$ multiplied with the mode-k unfolding of $\Acal$, such that:
\[
\Acal\times_k\bfM=\textup{fold}(\bfM\Acal_{(k)}),
\]
with $\textup{fold}(\bfM\Acal_{(k)}) \in \mathbb{R}^{n_{1} \times \dots \times n_{k-1} \times d \times n_{k+1} \times n_{p}}$.

\end{definition}

\begin{definition}[Frobenius norm]\label{def:frobenius}
The Frobenius norm of an order-$p$ tensor $\Acal\in\mathbb{R}^{n_1\times\dots\times n_p}$ is given by $\lVert \Acal  \rVert _{F}= \sqrt{\sum_{i=1_{1}}^{n_{1}} \sum_{i_{2}=1}^{n_{2}} \dots \sum_{i_{p}=1}^{n_{p}} |\Acal_{i_{1},i_{2},...,i_{p}}|^2}$.

\end{definition}

\begin{definition}[$f$-diagonal]
\label{def:fdiagonal}
A tensor is said to be facewise-diagonal, or $f$-diagonal, if its entries only lie along the diagonal of its frontal slices.
\end{definition}

\subsection{Tensor-Tensor Products}
\label{sec:tensorTensorProducts}

In this section, we introduce the $\starM$-product and describe its flexibility, which gives rise to tensor analogies to various familiar matrix concepts such as the transpose, the identity, and orthogonality under $\starM$-product. Kilmer and Martin  originated the method of multiplying matrices for the t-product in \cite{KilmerMartin2011}, and  Kernfeld et al. extended this to the $\starM$-product in \cite{KernfeldKilmerAeron2015}.  We describe the key definitions from these  works for order-$p$ tensors.
\begin{definition}[Facewise Product]
\label{def:fprod}
The facewise product multiplies each of the frontal slices of two tensors in the transform domain in parallel. Given $\Acal\in\mathbb{R}^{n_1\times m\times n_3\times\dots\times n_p}$, and $\Bcal\in\mathbb{R}^{m\times n_2\times n_3\times\dots\times n_p}$, the facewise product of $\Acal$ and $\Bcal$, denoted using ``$\triangle$" can be written as follows: 
\[
    \Ccal = \Acal\triangle\Bcal
\]
where for each frontal slice of $\Ccal$
\[
\Ccal(:,:,i_3,i_4,\dots,i_p)  = \Acal(:,:,i_3,i_4,\dots,i_p) \cdot \Bcal(:,:,i_3,i_4,\dots,i_p), 
\]
for $i_k = 1,\dots,n_k$, where $k = 3,\dots,p$. 
\end{definition}

Now that we have defined both the mode-$k$ product and the facewise product, we can introduce the $\starM$-product. \Cref{def:starm} defines the $\starM$-product, \Cref{alg:starMProd} demonstrates its computation, \Cref{fig:starM_visualization} demonstrates the product's computation for fourth-order tensors, and \Cref{exam:starm} illustrates a simple third-order example of computing the $\starM$-product.

\begin{definition}[$\starM$-product]\label{def:starm}
Given $\Acal\in\mathbb{R}^{n_1\times m\times n_3\times\dots\times n_p}$, and $\Bcal\in\mathbb{R}^{m\times n_2\times n_3\times\dots\times n_p}$, with invertible matrices $\bfM_3\in\mathbb{R}^{n_3\times n_3},\dots,\bfM_p\in\mathbb{R}^{n_p\times n_p}$, define $\Ccal\in\mathbb{R}^{n_1\times n_2\times\dots\times n_p}$ to be the $\starM$-product of $\Acal$ and $\Bcal$ such that 
    \[
    \Ccal=\Acal\starM\Bcal=(\hat{\Acal}\triangle\hat{\Bcal})\times_3\bfM_3^{-1}\times\dots\times_p\bfM_p^{-1}
    \]
     where
\[\hat{\Acal}=\Acal\times_3\bfM_3\times_4\bfM_4\times\dots\times_p\bfM_p \] .
\end{definition}

\begin{algorithm} 
\caption{$\starM$-product}
\label{alg:starMProd} 
 \begin{algorithmic}[1]
   \STATE \textbf{Inputs:} $\Acal\in\mathbb{R}^{n_1\times m\times n_3\times\dots\times n_p},\Bcal\in\mathbb{R}^{m\times n_2\times n_3\times\dots\times n_p}$, invertible $\bfM_3 \in \Rbb^{n_3\times n_3}, \dots, \bfM_p \in \Rbb^{n_p\times n_p}$
   \STATE $\hat{\Acal}=\Acal, \hat{\Bcal}=\Bcal$
   \FOR{$k=3,\dots,p$}
   \STATE $\hat{\Acal}=\hat{\Acal}\times_k \bfM_k$ , $\hat{\Bcal}=\hat{\Bcal}\times_k \bfM_k$
   \ENDFOR
   \STATE $\Ccal=(\hat{\Acal}\triangle\hat{\Bcal})$
   \FOR{$k=3,\dots,p$}
   \STATE $\Ccal=\Ccal\times_k \bfM_k^{-1}$
   \ENDFOR
   \STATE \textbf{Outputs:} $\Ccal\in\mathbb{R}^{n_1\times n_2\times\dots\times n_p}$
\end{algorithmic}
\end{algorithm}

Similar to how applying the Fourier Transform to a matrix of data will decouple relationships into frequencies, the purpose of the matrices  $\bfM_3,\dots,\bfM_p$ is to place a tensor in a transform domain where the higher-order relationships after the second dimension have been decoupled. The choice of $\bfM$ is left as a selectable parameter in the $\starM$-product. This introduces a means of flexibility into the framework in which one can strategically choose a transformation based on the nature of the data (e.g. the Discrete Fourier Transform for time series data). An additional important detail is that under the $\starM$-product, the lateral slices of a tensor are analogous to the columns of a matrix, and tubes are analogous to scalars.

\begin{figure}[h]
    \centering
    \includegraphics[width=1\linewidth]{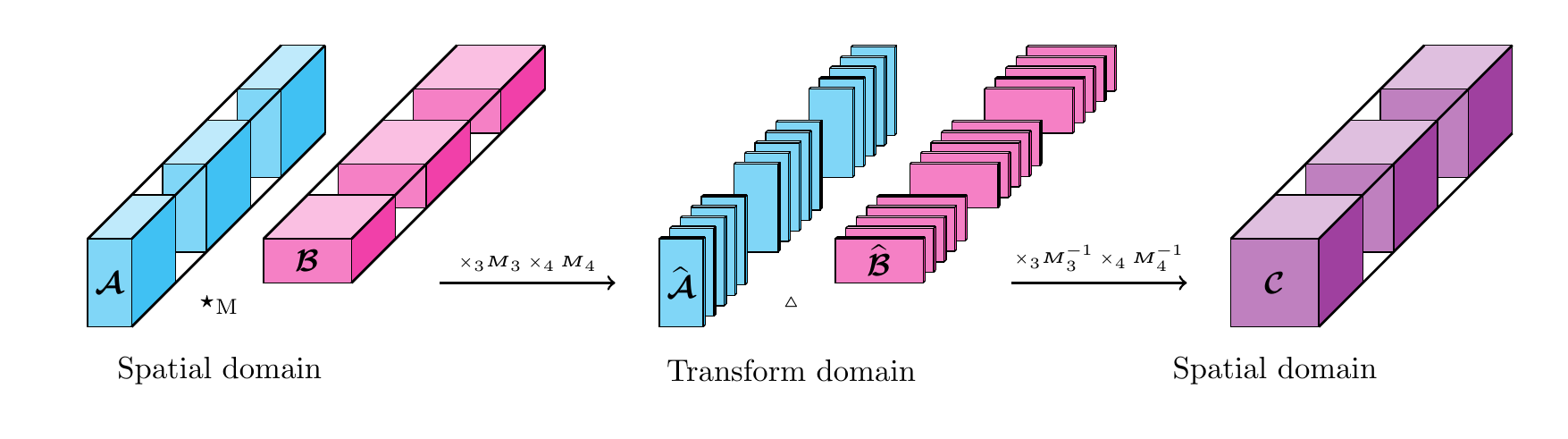}
    \caption{Visualization of $\starM$-product for 4D tensors}
    \label{fig:starM_visualization}
\end{figure}

\begin{example}\label{exam:starm} 
Given $\Acal \in \Rbb^{2\times 2 \times 2}$ and $\vec{\Bcal}\in \Rbb^{2\times 1\times 2}$ where
    \begin{align*}
        \Acal(:,:,1) &= \begin{bmatrix} 1 & 2 \\ 0 & -1\end{bmatrix} & \vec{\Bcal}(:,:,1) &= \begin{bmatrix} -1 \\ 1 \end{bmatrix}\\
        \Acal(:,:,2) &= \begin{bmatrix} -1 & 1 \\ 1 & 1\end{bmatrix} & \vec{\Bcal}(:,:,2) &= \begin{bmatrix} 0 \\ 1 \end{bmatrix}.
    \end{align*}

We choose the following $\bfM$:
\begin{align*}
\bfM&=\begin{bmatrix}
3 & 2 \\
1 & 1
\end{bmatrix} ,&
\bfM^{-1}&=\begin{bmatrix}
1 & -2 \\
-1 & 3
\end{bmatrix}
\end{align*}
We then move each tensor into the transform domain by taking the the mode-3 product of $\Acal$ and $\vec{\Bcal}$ with $\bfM$; that is, $\hat{\Acal} = \Acal \times_3 \bfM$ and $\hat{\vec{\Bcal}} = \vec{\Bcal}\times_3 \bfM$ which are
    \begin{align*}
        \hat{\Acal}(:,:,1) &= \begin{bmatrix} 1 & 8 \\ 2 & -1\end{bmatrix} & \hat{\vec{\Bcal}}(:,:,1) &= \begin{bmatrix} -3 \\ 5 \end{bmatrix}\\
        \hat{\Acal}(:,:,2) &= \begin{bmatrix} 0 & 3 \\ 1 & 0 \end{bmatrix} & \hat{\vec{\Bcal}}(:,:,2) &= \begin{bmatrix} -1 \\ 2 \end{bmatrix}
    \end{align*}

Now, we can take the facewise product of the two tensors to produce $\hat{\vec{\Ccal}} = \hat{\Acal} \triangle \hat{\vec{\Bcal}}$. The frontal slices obtained by computing this facewise product are described below: 

\begin{align*}
\hat{\vec{\Ccal}}(:,:,1) &= \begin{bmatrix} 37 \\ -11 \end{bmatrix} & \hat{\vec{\Ccal}}(:,:,2) &= \begin{bmatrix} 6 \\ -1 \end{bmatrix}
\end{align*}

Finally, we take the mode-3 product of $\bfM^{-1}$ with the tensor we computed above in order to obtain our final tensor $\Ccal = \hat{\Ccal} \times_3 \bfM^{-1}$, which is described below:

\begin{align*}
\Ccal(:,:,1) &= \begin{bmatrix} 25 \\ -9 \end{bmatrix} & \Ccal(:,:,2) &= \begin{bmatrix} -19 \\ 8 \end{bmatrix} 
\end{align*}

\end{example}

We also provide definitions of the $\starM$-transpose, the $\starM$-identity, and the notion of $\starM$-orthogonality. Note that these concepts arise directly from the $\starM$-product and are thus unique to this tensor framework.

\begin{definition}[$\starM$-transpose]\label{def:transpose}
$\Acal^\top\in\mathbb{R}^{n_2\times n_1\times\dots\times n_p}$ of $\Acal\in\mathbb{R}^{n_1\times n_2\times\dots\times n_p}$ is formed by transposing the frontal slices of $\Acal$ in the transform domain, or
\[
( \hat{\Acal}^{\top} ) (:,:,i_{3}, ..., i_{p} ) = \left( \hat{\Acal}(:,:,i_{3}, ..., i_{p} ) \right)^{\top} \text{ for } i_k = 1,\dots,n_k, \text{ where } k = 3,\dots,p.
\]
\end{definition}

\begin{definition}[$\starM$-identity]\label{def:identity}
The identity tensor $\Ical\in\mathbb{R}^{n\times n\times\dots\times n_p}$ is the tensor such that for any tensor $\Acal$, 
\[
\Acal\starM\Ical=\Ical\starM\Acal=\Acal.
\]
Note that in the transform domain, we have
\begin{align*}
    \widehat{\Ical}_{(:,:,i_3,\dots,i_p)}=\bfI \quad \text{ for } i_k = 1,\dots, n_k \text{ , } \text{ where } k=3,\dots, p.
\end{align*}
\end{definition}

\begin{definition}[$\starM$-orthogonality]\label{def:orthogonal}
A tensor $\Qcal\in\mathbb{R}^{n\times n\times\dots\times n_p}$ is orthogonal if
\begin{align*}
\Qcal^\top\starM \Qcal=\Qcal\starM \Qcal^\top=\Ical.
\end{align*}
\end{definition}

\subsection{t-SVDM}
\label{sec:tsvdm}

From the concepts defined in \Cref{sec:notation} and \Cref{sec:tensorTensorProducts}, we can introduce the t-SVDM, a higher-dimensional analog of the SVD based upon the $\starM$-product. The t-SVDM and its properties are described in \cite{kilmer2019tensortensor} for third-order tensors. In this section, we first provide the mathematical definition of the t-SVDM for tensors of order $p$ and describe its computation. We then highlight the important properties offered by the t-SVDM that make it ideal for the analysis of multidimensional data, similar to the benefits offered by the traditional SVD for matrix data.

First, we define the matrix SVD, which is described in further detail in \cite{strang}.

\begin{definition}[SVD]
\label{def:svd}
For a matrix $\bfA \in \mathbb{R}^{n_{1} \times n_{2} }$, the SVD is given by the decomposition
\begin{align*}
\bfA = \bfU \bfSigma \bfV^{\top},
\end{align*}
where $\bfU \in \mathbb{R}^{n_{1} \times n_{1}}$ and $\bfV \in \mathbb{R}^{n_{2} \times n_{2}}$ are orthogonal and $\bfSigma \in \mathbb{R}^{n_{1} \times n_{2}}$ is a diagonal matrix with all positive entries $\sigma_{i}$ for $i = 1, ... ,r$. These entries lie along the diagonal in descending order such that $\sigma_{1} \geq \sigma_{2} \geq \cdots \geq \sigma_{r} > 0$, where $r$ is the number of nonzero singular values.
\end{definition}

If we know that the matrix $\bfA$ is of rank $r$, then we can also express this decomposition as the sum of $r$ rank-1 matrices formed from the SVD matrices. Here, $\mathbf{u}_{i}$ and $\mathbf{v}_{i}$ represent the $i$-th column of $\bfU$ and $i$-th column of $\bfV$, respectively.
\begin{align*}
\bfA = \sum_{i=1}^{r} \sigma_i \mathbf{u}_{i} \mathbf{v}_{i}^\top
\end{align*}
Now that we have reviewed the SVD for matrices, we can introduce the t-SVDM.
\begin{definition}[t-SVDM]\label{def:tsvdm}
For a tensor $\Acal \in \mathbb{R}^{n_1 \times n_2 \times \dots \times n_p}$, the t-SVDM is given by the decomposition
\begin{align*}
\Acal = \Ucal \starM \Scal \starM \Vcal^\top,
\end{align*}
where $\Ucal \in \mathbb{R}^{n_1 \times n_1 \times n_3 \times \dots \times n_p }$ and $\Vcal \in \mathbb{R}^{n_2 \times n_2 \times n_3  \times \dots \times n_p }$ are $\starM$-orthogonal, and\linebreak $\Scal \in \mathbb{R}^{n_1 \times n_2 \times n_3  \times \dots \times n_p }$ is f-diagonal.
We denote $\mathbf{s}_{i}$ as the ($i$,$i$)-tube of $\Scal$. The tubes lie in descending magnitude such that $\|\mathbf{s}_{1}\|_F \geq \|\mathbf{s}_{2}\|_F \geq \cdots \geq \|\bfs_{r}\|_F > 0$ where $r$ is the number of nonzero singular tubes.
\end{definition}

A visualization of the t-SVDM for a third-order tensor is provided in \Cref{fig:tsvdm}.

\begin{figure}[H]
    \centering
    \includegraphics[width=0.5\textwidth]{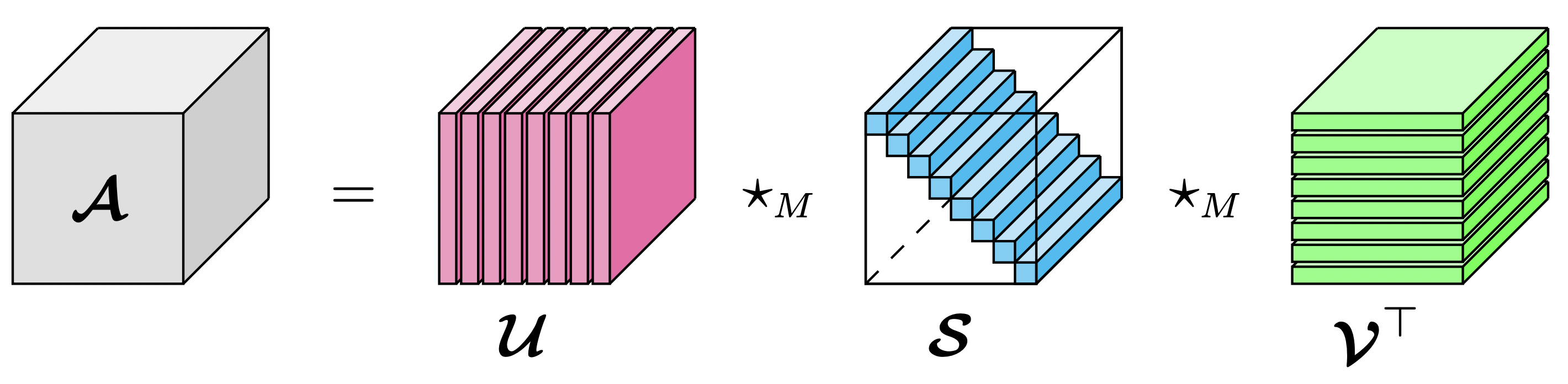}
    \caption{t-SVDM for third-order tensors}
    \label{fig:tsvdm}
\end{figure}

The t-SVDM also gives us a notion of rank for tensors, which we define in \Cref{def:t-rank}.

\begin{definition}[t-rank]\label{def:t-rank}
Let $\Acal \in \mathbb{R}^{n_1 \times n_2 \times \dots \times n_p}$, with its t-SVDM given by $\Ucal \starM \Scal \starM \Vcal^{\top}$. We say that the t-rank of $\Acal$ is equal to the number of nonzero tubes in $\Scal$, $r$.
\end{definition}

Suppose that $\Acal$ has t-rank-$r$. Then, similar to the matrix SVD case, we can rewrite $\Acal$ using the t-SVDM tensors as the sum of $r$ t-rank-1 tensors formed from the lateral slices of $\Ucal$ and $\Vcal$ and the tubes of $\Scal$:
$$
\Acal = \sum_{i=1}^{r} \vec{\Ucal}_{i} \starM \mathbf{s}_{ii} \starM \vec{\Vcal}_{i}^\top.
$$
We visualize this expansion and its relation to the t-SVDM tensors in \Cref{fig:rank1tensors}.

\begin{figure}[h]
    \centering
    \includegraphics[width=0.5\textwidth]{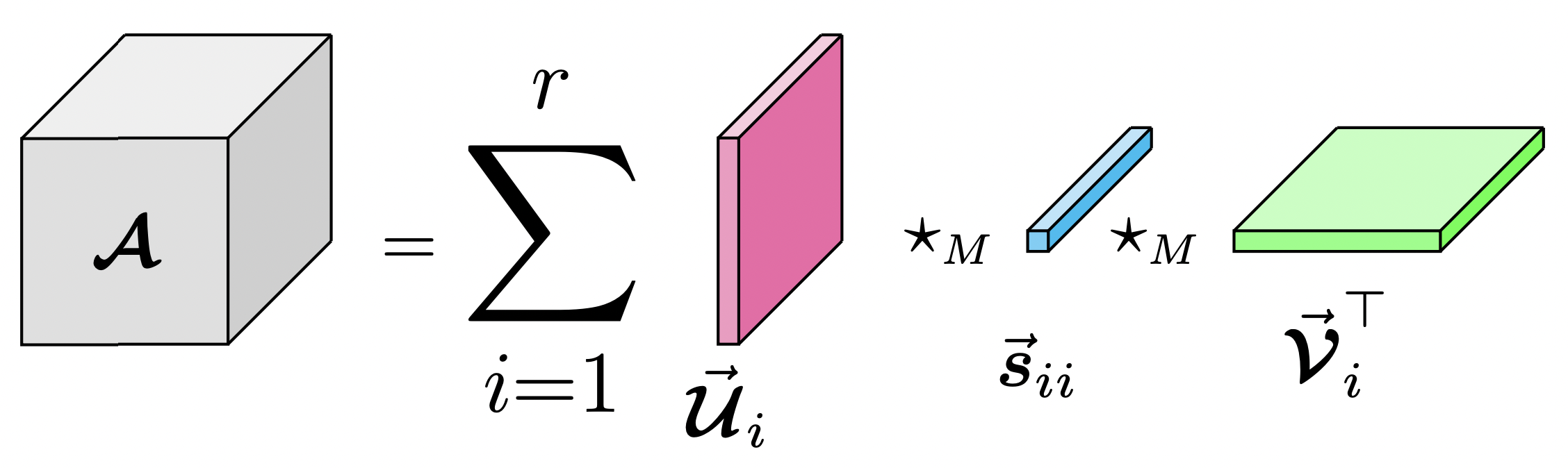}
    \caption{Expansion of t-rank-1 tensors derived from slices of $\Ucal$ and $\Vcal$ with diagonal tubes of $\Scal$}
    \label{fig:rank1tensors}
\end{figure}

We then introduce the implementation of t-SVDM in \Cref{alg:tsvdm}. Apart from the flexible choices of transformation matrices $\bfM$ as mentioned when discussing the $\starM$-product, the algorithm also has the advantage of allowing parallel computation as matrix SVD computations are not reliant on each other. This allows us to break the t-SVDM computation into smaller pieces. 
\begin{algorithm}[h]
\caption{t-SVDM}
\label{alg:tsvdm} 
 \begin{algorithmic}[1]
   \STATE \textbf{Input:} $\Acal \in \mathbb{R}^{n_1 \times n_2 \times \dots \times n_p}$, invertible $\bfM_3 \in \mathbb{R}^{n_3 \times n_3 }, ..., \bfM_p \in \mathbb{R}^{n_p \times n_p }$
   \STATE Move into transform domain: $\hat{\Acal} \leftarrow \Acal$
   \STATE Concatenate frontal slices along third dimension: $\hat{\Acal} = \reshape(\hat{\Acal},[n_1, n_2, n_3n_4\cdots n_p])$ 
   \FOR{$i = 1,\dots,(n_3n_4\cdots n_p)$}
        \STATE Compute matrix SVD:
            $\hat{\Acal}(:,:,i) = \hat{\Ucal}(:,:,i) \cdot \hat{\Scal}(:,:,i) \cdot \hat{\Vcal}(:,:,i)^\top$
   \ENDFOR
   \STATE Reshape into $p$-dimensional tensors:\\ $\hat{\Ucal} = \reshape(\hat{\Ucal}, [n_{1}, n_{1}, n_{3} ... , n_{p} ])$\\
   $\hat{\Scal} = \reshape(\hat{\Scal}, [n_{1}, n_{2}, n_{3} ... , n_{p}])$\\
   $\hat{\Vcal} = \reshape(\hat{\Vcal}, [n_{2}, n_{2}, n_{3} ... , n_{p} ])$
   \STATE Move back to original domain: $\Ucal \leftarrow \hat{\Ucal},  \Vcal \leftarrow \hat{\Vcal} ,  \Scal \leftarrow \hat{\Scal}  $\\
   \STATE \textbf{Output:} $\Ucal \in \mathbb{R}^{n_1 \times n_1 \times \dots \times n_p }$, $\Vcal \in \mathbb{R}^{n_2 \times n_2 \times \dots \times n_p }$, $\Scal \in \mathbb{R}^{n_1 \times n_2 \times \dots \times n_p }$ 
\end{algorithmic}
\end{algorithm}

The many useful properties of the matrix SVD are well-understood and documented \cite{strang}. We now highlight some similar beneficial properties offered by the t-SVDM.  

\textbf{Basis:} For matrices, one can take a linear combination of basis vectors to produce a new vector. In the matrix SVD, the columns of $\bfU$ form a basis for the range or column space of $\bfA$.  This means that for each column $j$, we can find scalars $c_1, ..., c_r \in \mathbb{R}$ such that
\begin{align*}
    \bfA(:,j) = c_1\bfU(:,1) + \dots + c_r \bfU(:,r).
\end{align*}
As mentioned earlier, the tensor analogs for columns and vectors are the lateral slices. The lateral slices of $\Ucal$ (the $\vec{\Ucal}_{i}$ slices) form a basis for approximating the original tensor $\Acal$. The lateral slices of $\Acal$ can be obtained by taking a tensor linear combination (or t-linear combination), in which one computes the $\starM$-product of the lateral slices with tubes (the analog to scalars in the $\starM$-framework) \cite{Kilmer2013:tensorOperators}. 
For each lateral slice $j$, we can find tubes $\mathbf{c}_1, ..., \mathbf{c}_r \in \mathbb{R}^{1 \times 1 \times n_{3} \times \dots \times n_{p}}$ such that
\begin{align*}
    \vec{\Acal}_j =  \vec{\Ucal}_{1} \starM \mathbf{c}_1 + \dots +  \vec{\Ucal}_{r} \starM \mathbf{c}_r.
\end{align*}

\textbf{Eckart-Young Theorem in $\starM$-framework:} The Eckart-Young theorem \cite{strang} states that for any matrix $\bfB$ with rank $k$, we have $\| \bfA - \bfA_{k} \|_{F} \leq \| \bfA - \bfB \|_{F}$. An extension of  the Eckart-Young theorem also exists for t-rank-$k$ approximations obtained using the t-SVDM. 
Note that the following theorem simply extends work that have been provided in \cite{kilmer2019tensortensor} to tensors of order-$p$ rather than third order.

\begin{theorem}[Eckart-Young for higher-order $\starM$-product]\label{thm:eckartYoung}
Let $\Acal \in \Rbb^{n_1\times n_2\times \dots \times n_p}$ be a order-$p$ tensor with the full t-SVDM $\Acal = \Ucal \starM \Scal \starM \Vcal^\top$ where the $\starM$-product consists of only multiples of orthogonal transformations.
Define $\Acal_k = \Ucal_k \starM \Scal_k \starM \Vcal_k^\top$ as the t-rank-$k$ approximation of $\Acal$ obtained through truncation. 
Then, 
    \begin{align*}
        \Acal_k &= \argmin_{\Bcal\in \Rbb^{n_1\times n_2\times \dots \times n_p}} \|\Acal - \Bcal\|_F\quad \st \quad\trank(\Bcal)=k.
    \end{align*}
Furthermore, the squared error is given by
\begin{align*}
        \|\Acal - \Acal_k\|_F^2 = \sum_{i=k+1}^r \|\bfs_{i}\|_F^2
\end{align*}
where $\bfs_i$ is the $(i,i)$-tube of $\Scal$ and $r$ is the t-rank of $\Acal$.
\end{theorem}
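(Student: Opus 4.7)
The plan is to lift the problem into the transform domain, where the $\starM$-product reduces to independent matrix products on the frontal slices, apply the classical matrix Eckart--Young theorem slice-by-slice, and then push the optimal answer back into the original domain.

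\textbf{Step 1 (norm invariance).} First I would show that, because every $\bfM_k$ is a scalar multiple of an orthogonal matrix, there is a positive constant $c$ (depending only on the $\bfM_k$'s) such that for every order-$p$ tensor $\Xcal$ of compatible size,
\begin{equation*}
\|\Xcal\|_F^2 \;=\; c \sum_{i_3=1}^{n_3}\cdots\sum_{i_p=1}^{n_p} \bigl\|\hat{\Xcal}(:,:,i_3,\ldots,i_p)\bigr\|_F^2.
\end{equation*}
This reduces to the fact that mode-$k$ multiplication by an orthogonal matrix preserves the Frobenius norm (the mode-$k$ unfolding is left-multiplied by an orthogonal matrix), combined with an induction on $k=3,\ldots,p$ to stack the transformations; the scalar factors from the ``multiple of orthogonal'' assumption are collected into the single constant $c$.

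\textbf{Step 2 (rank constraint decouples).} Next I would translate the constraint $\trank(\Bcal)=k$ into a slicewise rank condition in the transform domain. Using \Cref{def:tsvdm} and the fact that $\hat{\Scal}$ is $f$-diagonal with the singular values of $\hat{\Acal}(:,:,i_3,\ldots,i_p)$ on each frontal face, the tube $\bfs_i$ is zero exactly when the $i$-th singular value of every frontal slice in the transform domain vanishes. Hence $\trank(\Bcal)\le k$ is equivalent to $\rank\!\bigl(\hat{\Bcal}(:,:,i_3,\ldots,i_p)\bigr)\le k$ for all indices $(i_3,\ldots,i_p)$.

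\textbf{Step 3 (apply matrix Eckart--Young slicewise and reassemble).} Combining Steps 1 and 2, the minimization
\begin{equation*}
\min_{\trank(\Bcal)=k}\|\Acal-\Bcal\|_F^2 \;=\; c \min \sum_{i_3,\ldots,i_p}\bigl\|\hat{\Acal}(:,:,i_3,\ldots,i_p)-\hat{\Bcal}(:,:,i_3,\ldots,i_p)\bigr\|_F^2
\end{equation*}
decouples over $(i_3,\ldots,i_p)$, each subproblem being an independent matrix best rank-$k$ approximation. By the matrix Eckart--Young theorem each is solved by truncating the SVD of $\hat{\Acal}(:,:,i_3,\ldots,i_p)$ to its top $k$ singular triplets. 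Stacking these truncated factorizations and reshaping via \Cref{alg:tsvdm} yields exactly $\Acal_k=\Ucal_k\starM\Scal_k\starM\Vcal_k^\top$, proving optimality.

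\textbf{Step 4 (error formula).} Finally, the squared error on each frontal slice in the transform domain is the sum of the squared singular values beyond index $k$. Summing over all transform slices and applying Step 1 in reverse collapses the double sum into $\sum_{i=k+1}^r \|\bfs_i\|_F^2$, using the identity $\|\bfs_i\|_F^2 = c\sum_{i_3,\ldots,i_p}\hat{\Scal}(i,i,i_3,\ldots,i_p)^2$ and cancelling $c$. The constant $c$ cancels because the same norm-scaling applies to $\Scal$ and to $\Acal-\Acal_k$.

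The main obstacle I anticipate is keeping the scalar constants from the ``multiple of orthogonal'' hypothesis consistent throughout Steps 1 and 4, so that the final error formula has coefficient exactly $1$ with no stray factors; everything else is a clean reduction to the matrix case.
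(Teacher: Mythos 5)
Your proposal is correct and follows essentially the same route as the paper: pass to the transform domain, use the scalar-multiple-of-orthogonal hypothesis to get norm invariance up to a constant, apply the matrix Eckart--Young theorem facewise, and translate the error back via the singular tubes. The only cosmetic difference is that you spell out explicitly why the t-rank constraint decouples into independent slicewise rank constraints (your Step 2), whereas the paper handles this by reshaping to a third-order tensor and citing the known third-order result.
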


\begin{proof}
See \Cref{sec:proofs_appendix}.
\end{proof}

\textbf{Optimal t-rank-$k$ Tensor Representation over rank-$k$ Matrix Representation:} Another important property is the provable optimality of a t-rank-$k$ approximation of a tensor $\Acal$ obtained using the t-SVDM compared to a rank-$k$ approximation obtained from a matrix $\bfA$ containing the vectorized information of $\Acal$, or
\begin{align*}
\| \Acal - \Acal_{k} \|_{F} \leq \| \bfA - \bfA_{k} \|_{F} .
\end{align*}
Here, $\Acal_k$ and $\bfA_{k}$ represent a t-rank-$k$ tensor and a rank-$k$ matrix, respectively. The proof for this is provided in \cite{kilmer2019tensortensor} and relies upon the Eckart-Young theorem for tensors as defined earlier. For our work, this fact illustrates that representing inherently high-dimensional data as a tensor is provably optimal to the corresponding matrix representation. In the following section, we introduce a classification approach based upon the $\Ucal$ obtained using the t-SVDM.

\section{Local Tensor SVD Approaches for Classification}
\label{sec:local}

Classification tasks rely on two basic assumptions: data from the same class share common features and data from distinct classes have different fundamental features. 
The crux of classification algorithms is the method by which one extracts meaningful features from the data.  
In this work, we consider supervised classification tasks, where the class labels are known a priori. In our case, the class labels are whether the subject is viewing a picture or reading a sentence. We assume that fMRI trials with either one of these class labels share some fundamental commonalities, i.e. that brains viewing a picture are responding in a distinct way from which they would respond while reading a sentence. We also assume that such differences are detectable using fMRI.

\subsection{Algorithm Overview}
\label{sec:localTSVD}

\begin{figure}[h]
    \centering
    
    \includegraphics{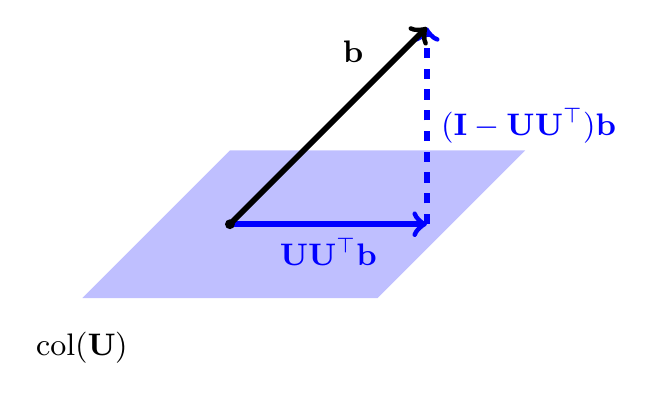}
    \includegraphics{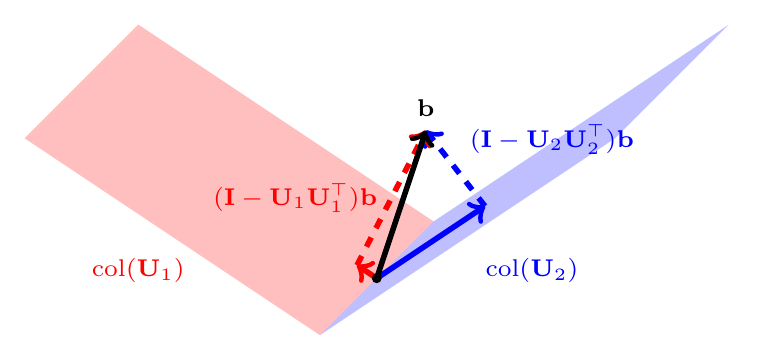}

    \caption{Illustration of a single orthogonal projection in $\Rbb^3$ (left) and using orthogonal projections in $\Rbb^3$ to classify (right). 
    In the left illustration, assume $\bfU$ is a matrix with orthonormal columns. 
    The projection (blue solid), $\bfU\bfU^\top \bfb$, lies in the column space of $\bfU$ and the error (blue dashed), $(\bfI - \bfU\bfU^\top)\bfb$, is orthogonal to the projection.
    In the right illustration, the vector $\bfb$ is orthogonally projected onto the column spaces of $\bfU_1$ and $\bfU_2$ respectively.
    The vector $\bfb$ lies more in $\col(\bfU_2)$ than in $\col(\bfU_1)$. 
    Hence, $\bfb$ would be classified as belonging to the subspace spanned by $\bfU_2$.}
    \label{fig:projectionIllustration}
\end{figure}

Our work uses a projection-based classification approach as presented in \cite{newman2017image}. We generalize this approach to higher-order tensors and a family of tensor-tensor products. 
This approach begins by extracting features from our training data through building a local basis for each class.  
Then, we orthogonally project a test image onto the spaces spanned by each local basis, as illustrated in \Cref{fig:projectionIllustration} in $\Rbb^3$.  
We make a classification decision based on the class of the local basis for which the projection produces the smallest norm difference with the original test image.
The key to successful projection-based classification is finding a representative basis for each class that simultaneously captures the fundamental features of the class and distinguishes between distinct classes.  

Due to its provably optimal representation and the flexible choice of transformation, we propose forming local bases and projections using the t-SVDM. 
Specifically, let $\Acal_i$ contain the fMRI data for the $i$-th class as lateral slices of $\Acal_i$; that is, $\Acal_i$ is a fifth-order tensor of size $(x, \textup{trials}_i ,  y , z , \textup{time})$ where $x$, $y$, and $z$ correspond to the spatial dimensions\footnote{Note we can permute the dimensions of the class tensor freely as long as the second dimension corresponds to the number of images.}. We compute its t-SVDM
    \begin{align*}\label{eq:localTSVDM}
        \Acal_i = \Ucal_i \starM \Scal_i \starM \Vcal_i^\top.
    \end{align*}

We choose the second dimension to be the trials so that each lateral slice of $\Acal_i$ is one fMRI image.  
Hence, the lateral slices of $\Ucal_i$ form an orthonormal basis which contain the most important features of the fMRI data across the trials.

To obtain features that are representative of class $i$, but distinguishable from other classes, we truncate the t-SVDM to $k$ terms\footnote{Note that we can also select a different truncation parameter $k$ for each class $i$.}
    \begin{align*}\label{eq:localTSVDMTruncated}
        \Acal_i \approx \Ucal_{i,k} \starM \Scal_{i,k} \starM \Vcal_{i,k}^\top
    \end{align*}
where $\Ucal_{i,k}$ contains the first $k$ lateral slices and $\Scal_{i,k}$ and $\Vcal_{i,k}$ are truncated accordingly. 
The truncated t-SVDM will be the best t-rank-$k$ approximation to $\Acal_i$ (see \Cref{thm:eckartYoung}) and hence the local basis $\Ucal_{i,k}$ is the best set of $k$ lateral slices to describe class $i$.  

\sloppy Suppose we have $c$ classes.  We form class tensors $\Acal_0,\dots, \Acal_{c-1}$ and local bases $\Ucal_{0,k_1}, \dots, \Ucal_{c-1,k_{c-1}}$ where each class can have its own truncation and choice of transformation.   
We orthogonally project a test fMRI image (i.e., an image not contained in the class tensors) stored as a lateral slice $(x, 1 , y , z , \textup{time})$ onto each of the class spaces via
    \begin{align*}\label{eq:projectionTSVDM}
        \vec{\Pcal}_i = \Ucal_{i,k_i}\starM \Ucal_{i,k_i}^\top \starM \vec{\Tcal}.
    \end{align*}
The projection $\vec{\Pcal}_i$ is a t-linear combination of the lateral slices contained in $\Ucal_{i,k_i}$, and hence lies in the span of $\Ucal_{i,k_i}$; see~\cite{Kilmer2013:tensorOperators, KernfeldKilmerAeron2015} for details. 
Analogous to the matrix case, $\vec{\Pcal}_i$ is the closest image lying in the span of $\Ucal_{i,k_i}$ to the original image $\vec{\Tcal}$. Here, closeness is measured in the Frobenius norm, although other norms can also be utilized.

After projecting the test image onto each of the spaces spanned by the local bases, we classify based on the projection that was closest to the original image; that is,
\begin{align*}
i^* = \argmin_{i=0,\dots,c-1} \|\vec{\Tcal} - \vec{\Pcal}_i\|_F.
\end{align*}
Here, $i^*$ is the predicted class.

The t-SVDM projection-based classification algorithm offers several advantages.
First, the projection-based method is simple and efficient to implement.  The local bases are pre-computed and can be formed in parallel because the class tensors are distinct. 
Second, the proposed algorithm is a direct method. We do not form a parameterized decision boundary and hence there is no training process to adjust parameters.  
Third, our method is flexible.   We extend the original work in~\cite{newman2017image} to a more general family of tensor-tensor products based on the choice of $\bfM$. 
In doing so, we offer many choices of transformations that, when well-chosen, can improve the classification results. 
Fourth, this algorithm is based upon a rigorous tensor algebraic framework created by the $\starM$-product and is therefore mathematically justified. As seen in \Cref{thm:eckartYoung}, the t-SVDM satisfies an Eckart-Young-like Theorem, hence the local bases we form are in some sense optimal. This also gives us a natural analog to projections in multidimensional space.

\subsection{Intuition Example for Algorithm}
\label{sec:mnist}
This following example is intended to serve as a stepping stone towards understanding this t-SVDM classification algorithm before we proceed to describing our application to fMRI data. The MNIST database consists of 70000 $28 \times 28$ grayscale images where each contains one digit between 0 and 9, resulting in 10 possible classes \cite{lecun-mnisthandwrittendigit-2010}. For illustrative purposes, we apply the local t-SVDM algorithm using only the first two classes (digits 0 and 1). 

\begin{center}
    \begin{figure}[h]
    \centering
    \begin{subfigure}[b]{0.25\textwidth}
    \centering
    \stackunder[5pt]{\includegraphics[trim={0cm 0 0 1cm},clip,width=0.8\textwidth]{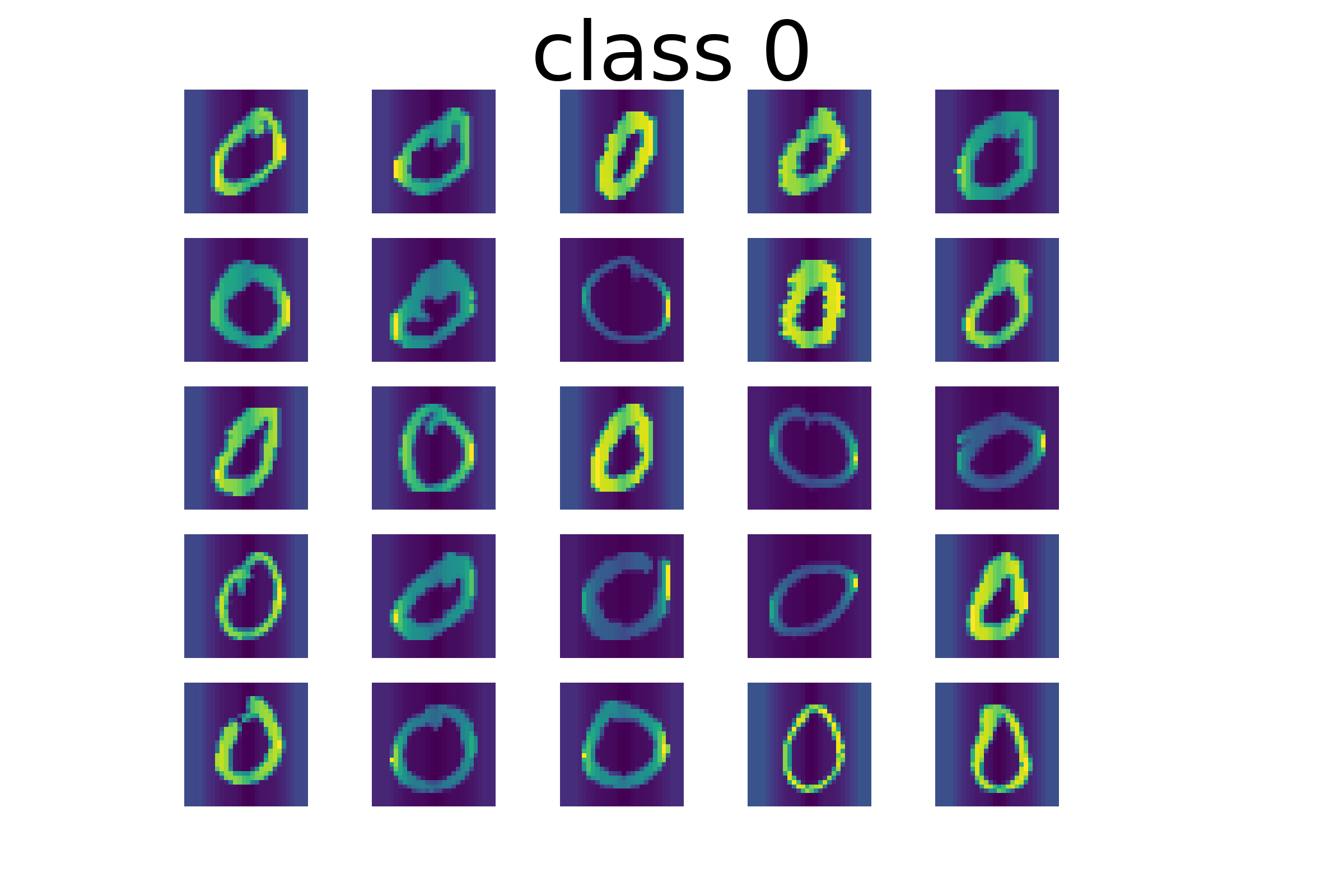}}{$\Acal_{0}$}
    \end{subfigure}
    \hspace{3cm}
    \begin{subfigure}[b]{0.25\textwidth}
    \centering
    \stackunder[5pt]{\includegraphics[trim={0cm 0 0 1cm},clip,width=0.8\textwidth]{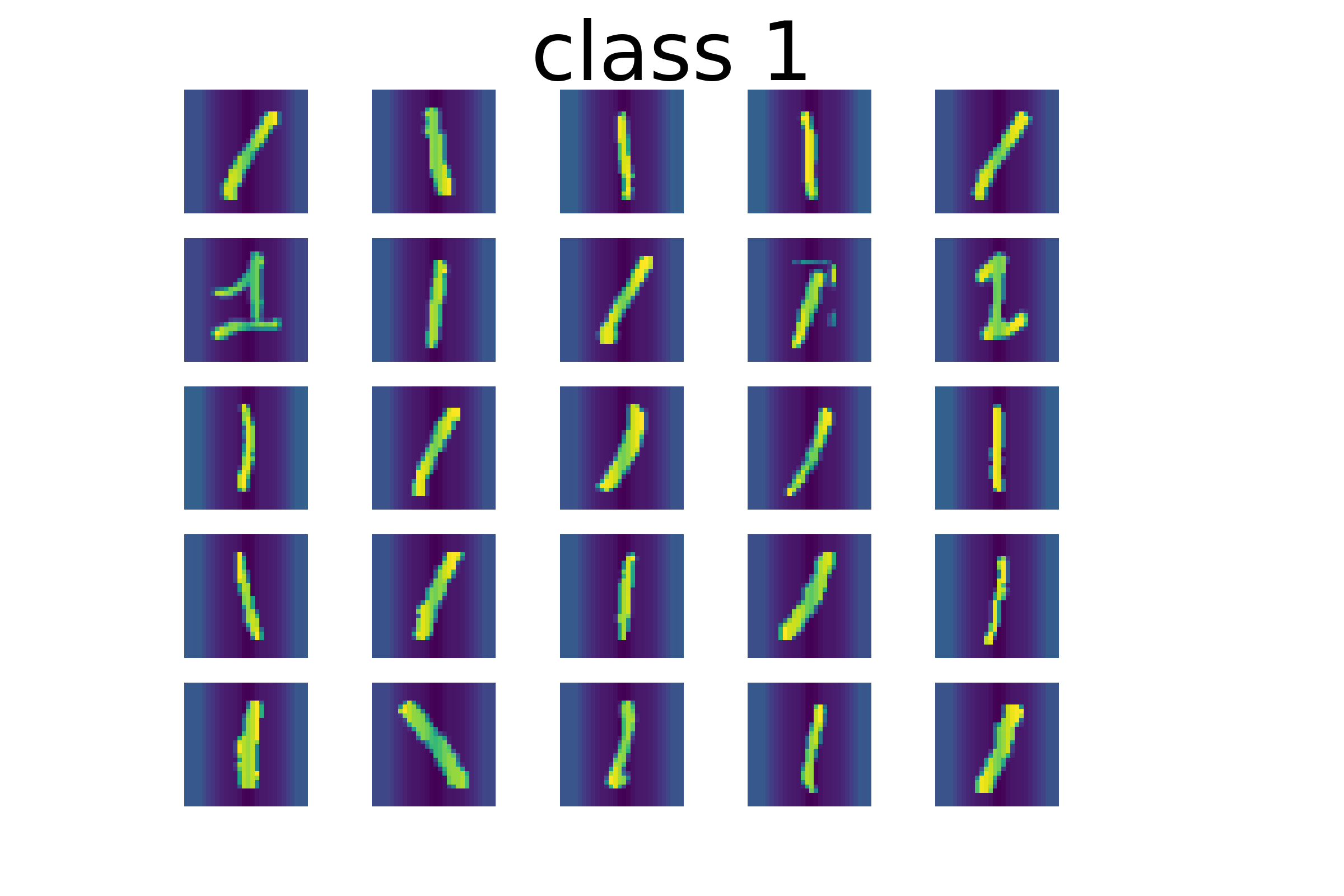}}{$\Acal_{1}$}
    \end{subfigure}\\
    \vspace{0.3cm}
    \centering
    \begin{subfigure}[b]{0.4\textwidth}
    \centering
    \stackunder[5pt]{\includegraphics[width=0.4\textwidth]{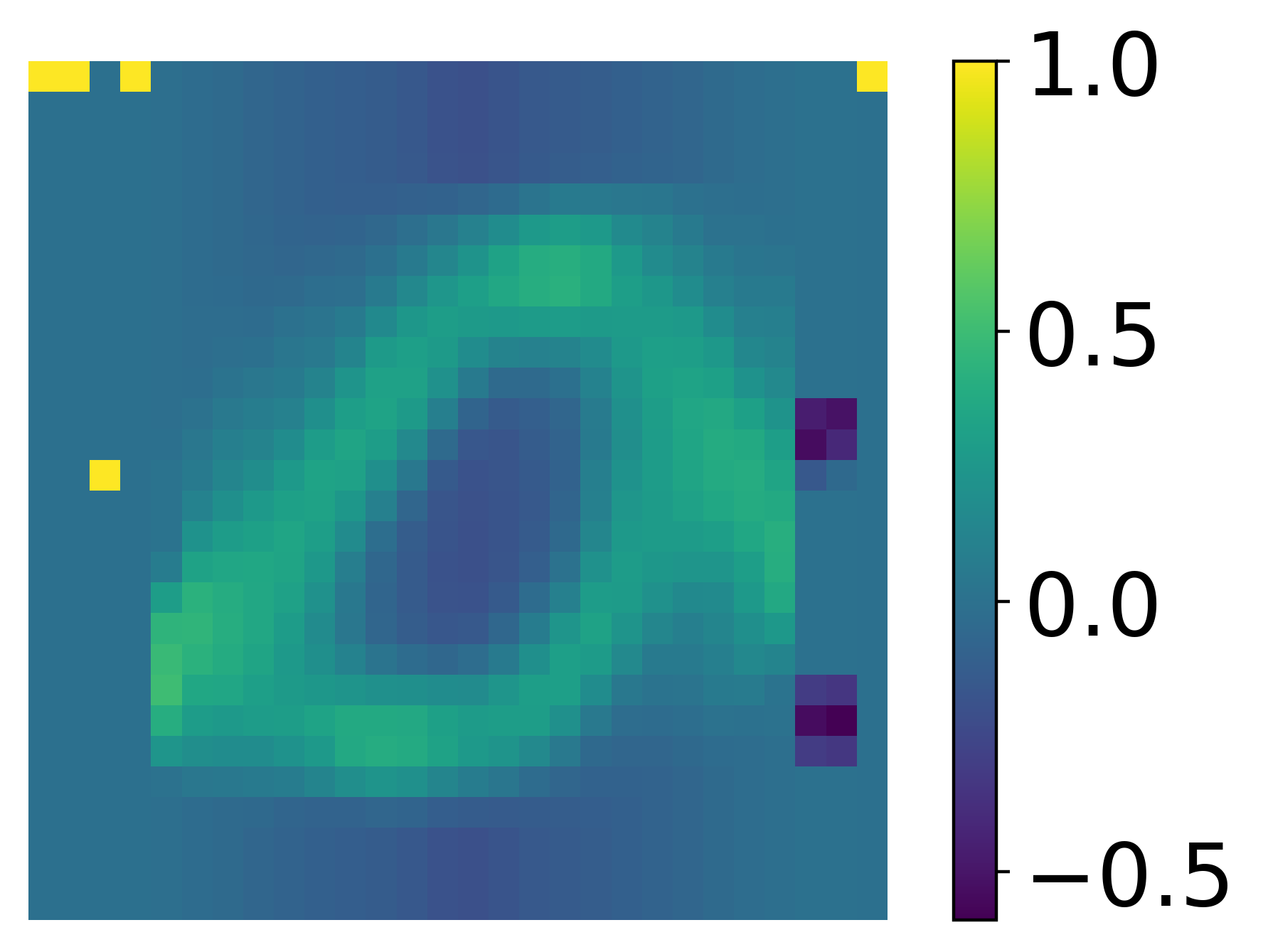}
    \includegraphics[width=0.4\textwidth]{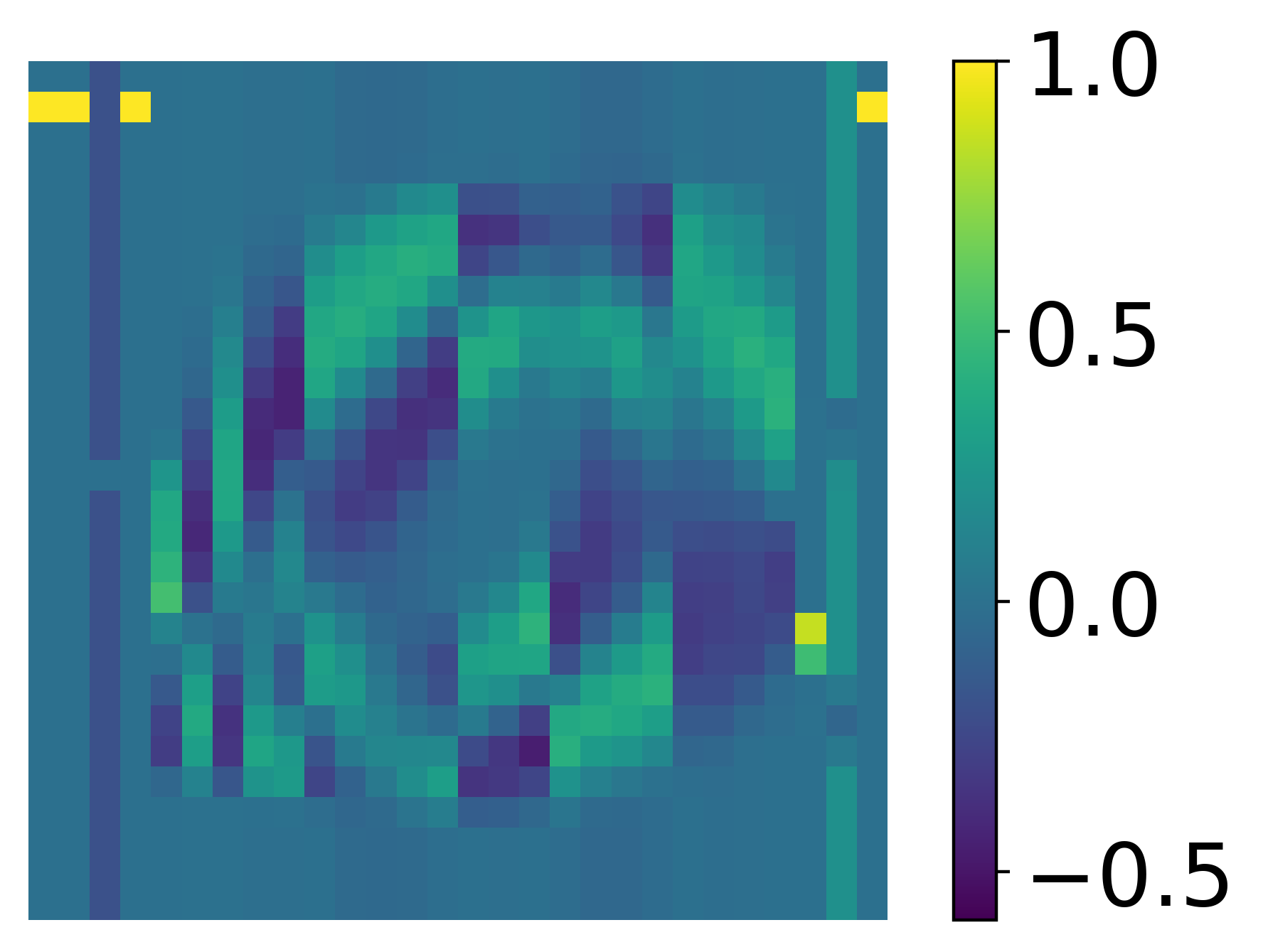}}{$\Ucal_{0,2}$}
    \end{subfigure}
    \hspace*{2em}%
     \begin{subfigure}[b]{0.4\textwidth}
    \centering
    \stackunder[5pt]{\includegraphics[width=0.4\textwidth]{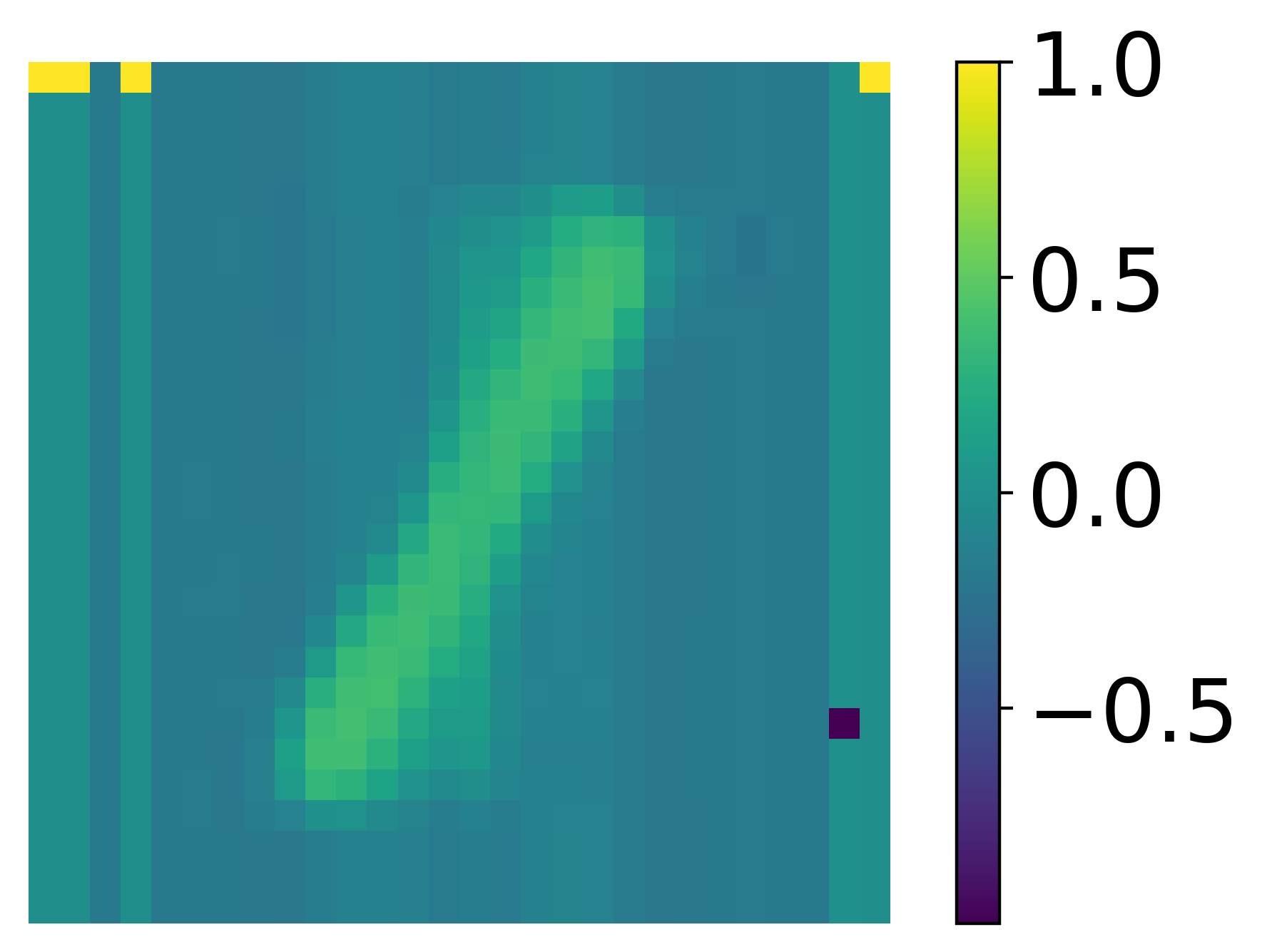}
    \includegraphics[width=0.4\textwidth]{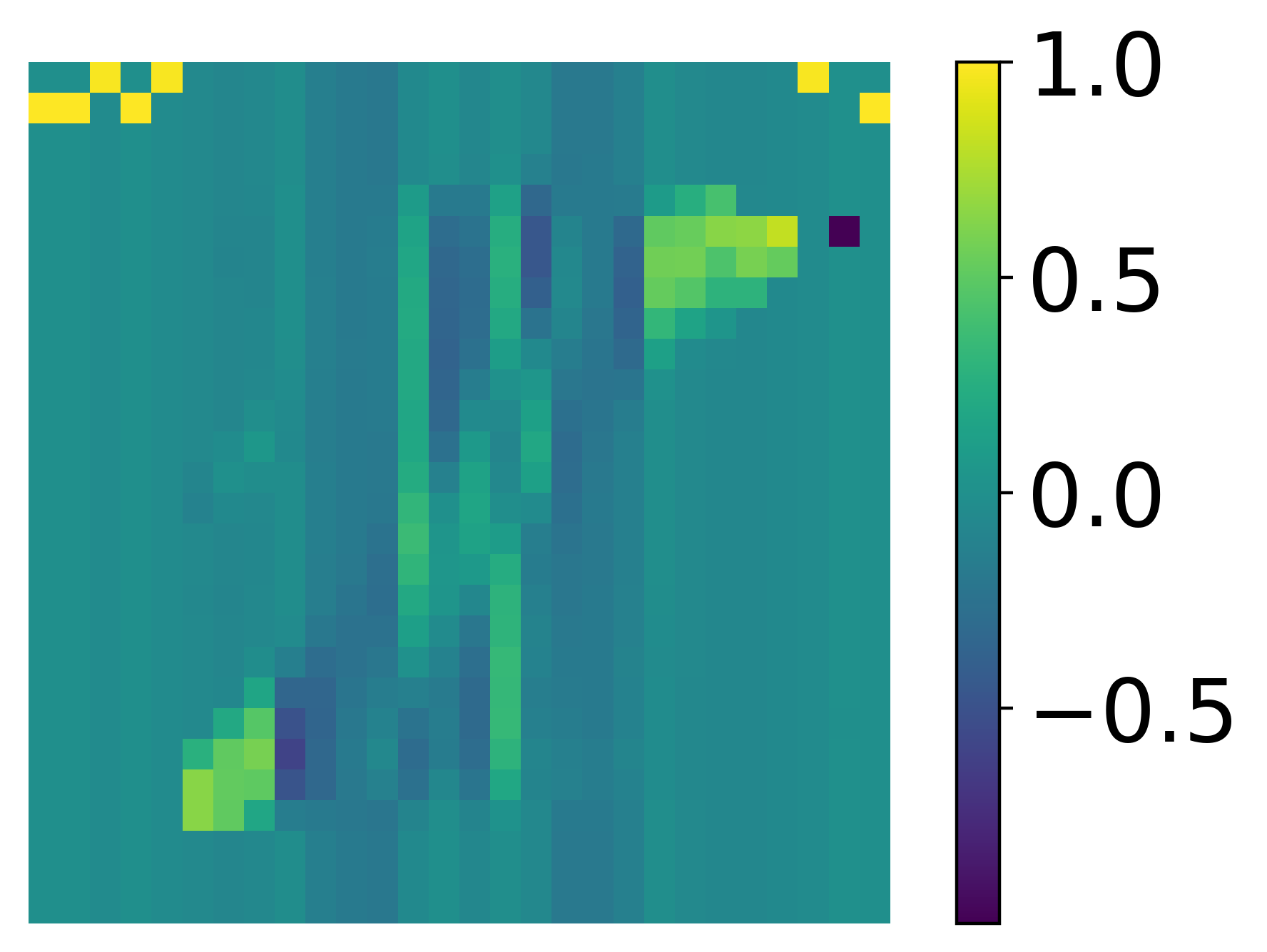}}{$\Ucal_{1,2}$}
    \end{subfigure}\\
    \vspace{0.3cm}
    \centering
    \begin{subfigure}[b]{0.18\textwidth}
    \centering
    \stackunder[5pt]{\includegraphics[width=0.9\textwidth]{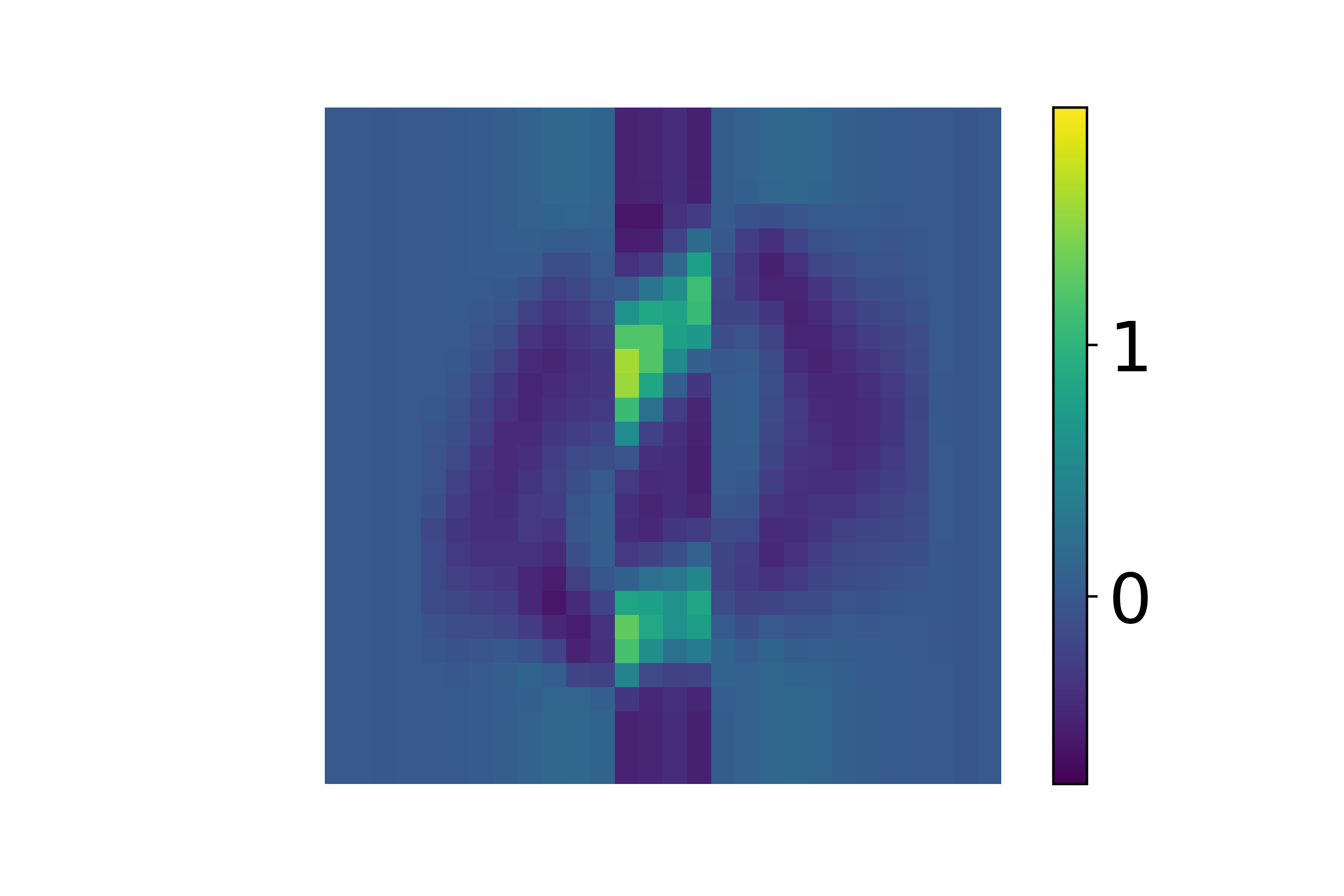}}{$\Pcal_{0}$}
    \end{subfigure}
    \begin{subfigure}[b]{0.18\textwidth}
    \centering
    \stackunder[5pt]{\includegraphics[width=0.9\textwidth]{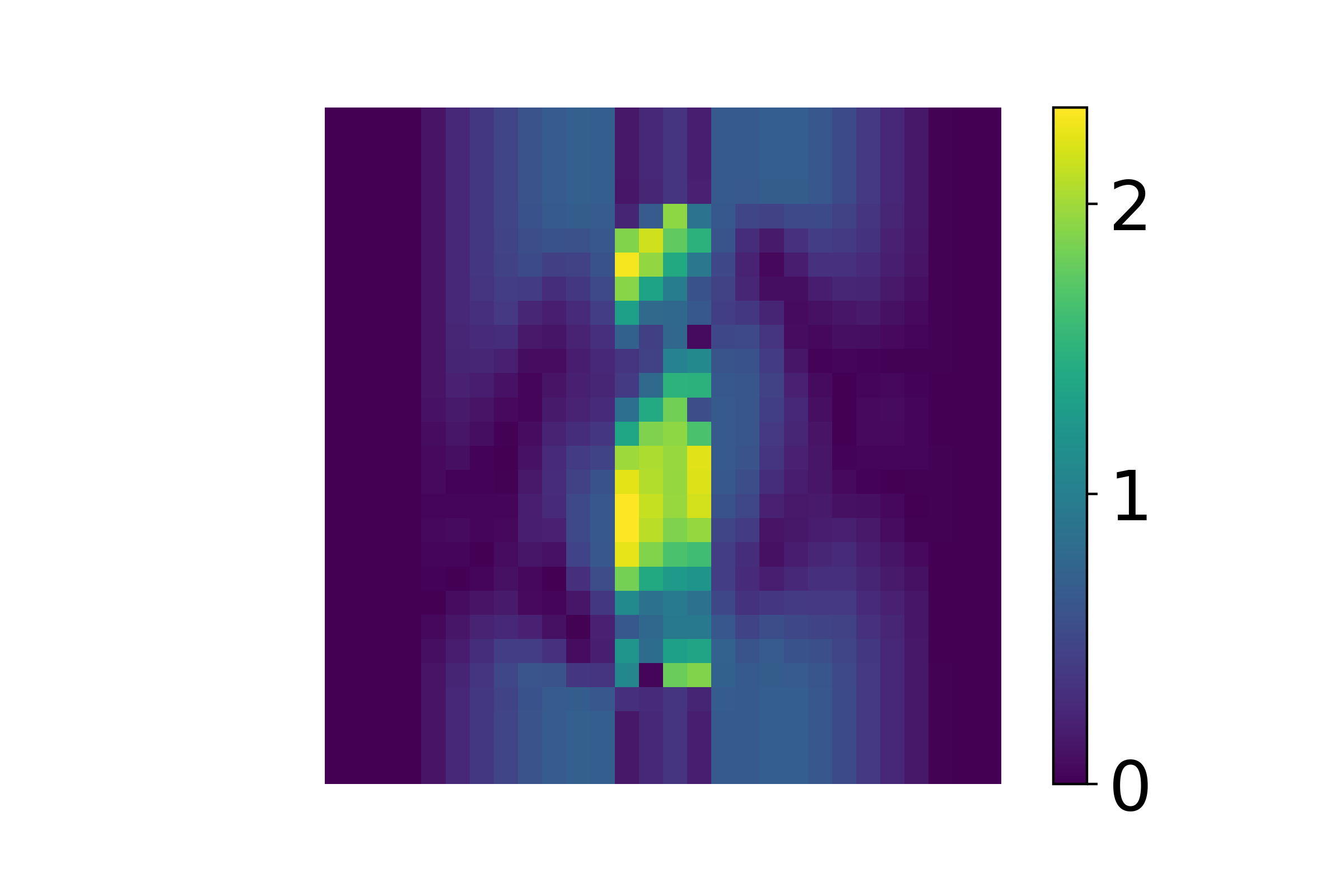}}{$|\Tcal - \Pcal_{0}|$}
    \end{subfigure}
    \begin{subfigure}[b]{0.18\textwidth}
    \centering
    \stackunder[5pt]{\includegraphics[width=0.9\textwidth]{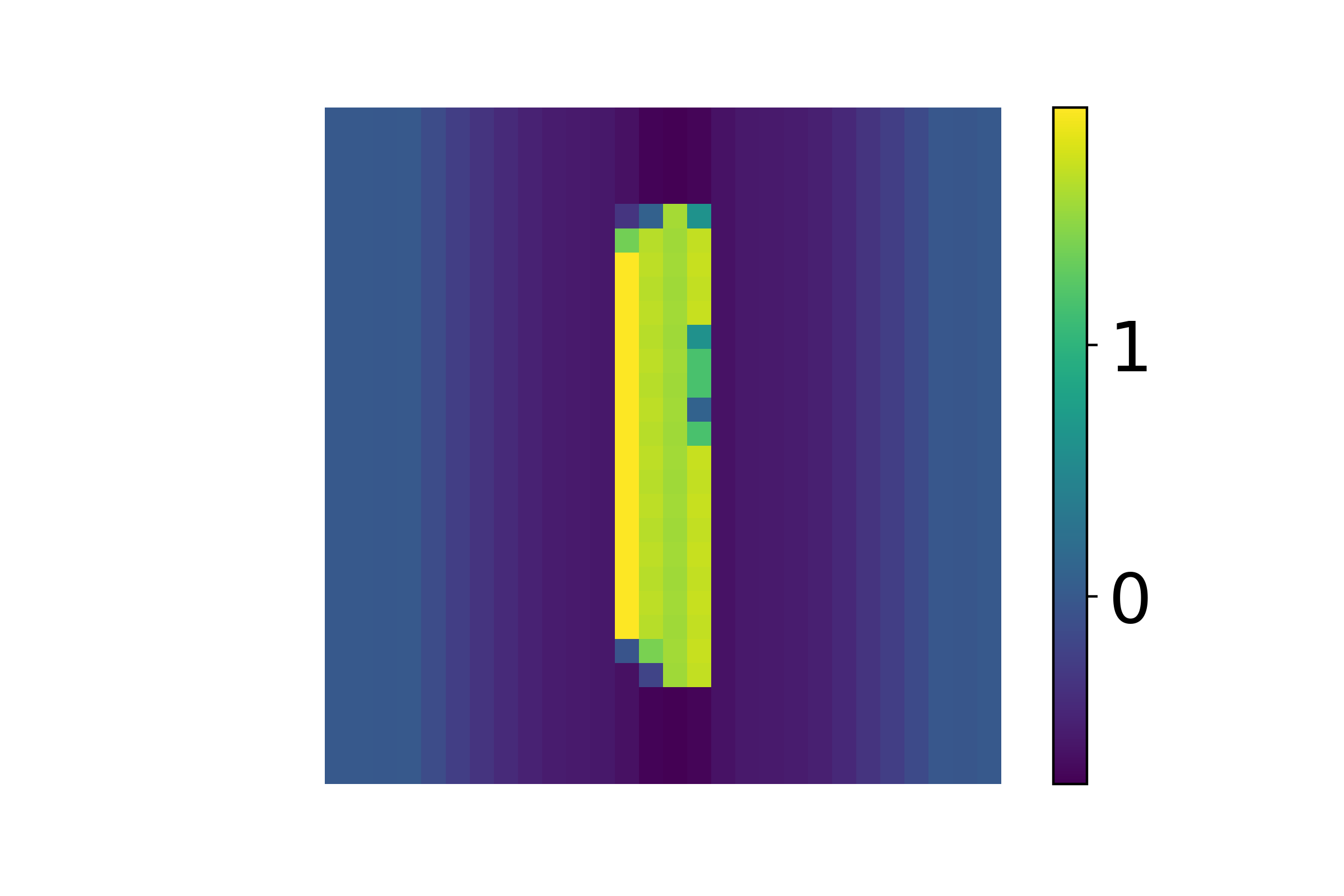}}{$\Tcal$}
    \end{subfigure}
    \begin{subfigure}[b]{0.18\textwidth}
    \centering
    \stackunder[5pt]{\includegraphics[width=0.9\textwidth]{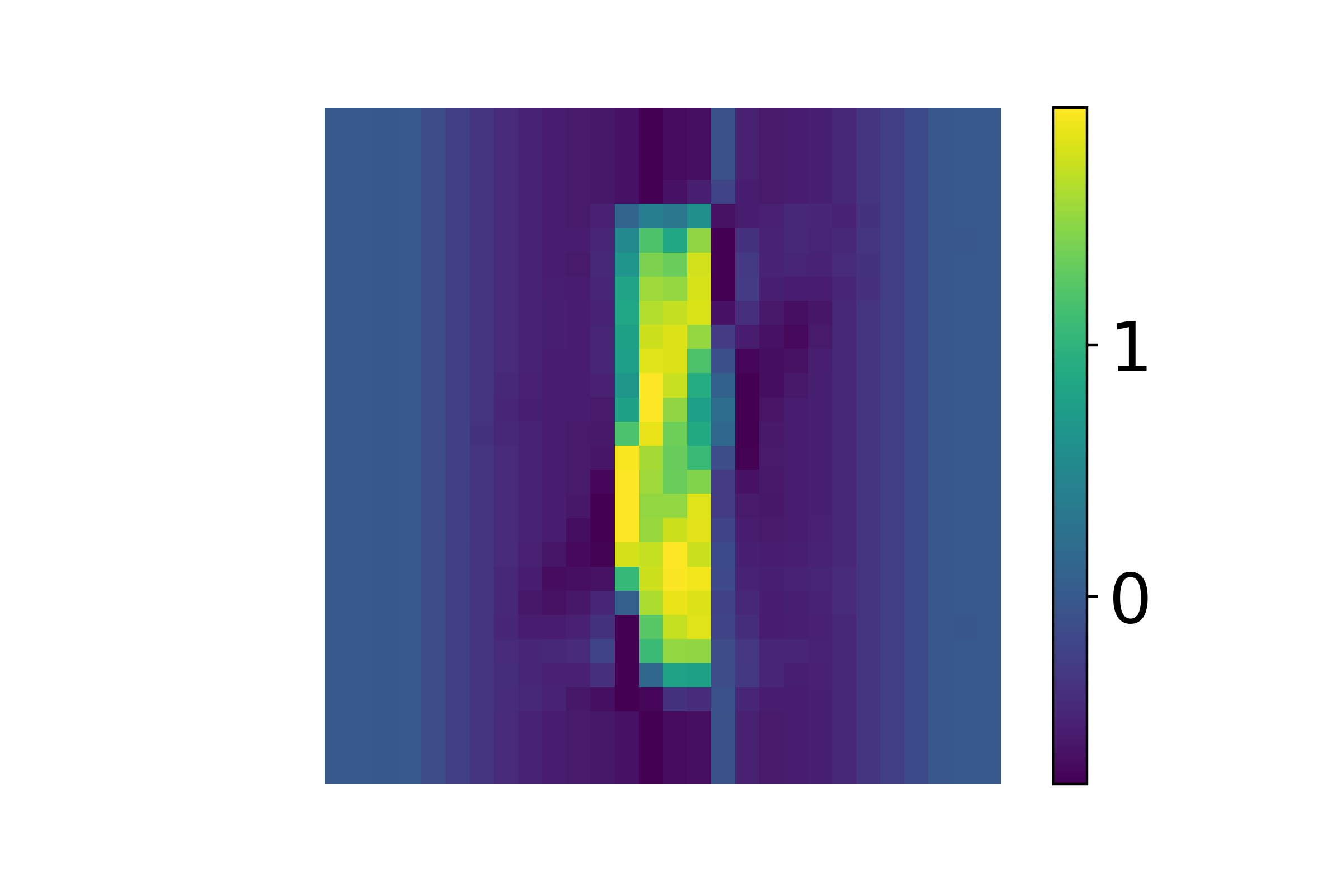}}{$\Pcal_{1}$}
    \end{subfigure}
    \hspace*{0.2em}
    \begin{subfigure}[b]{0.18\textwidth}
    \centering
    \stackunder[5pt]{\includegraphics[width=0.9\textwidth]{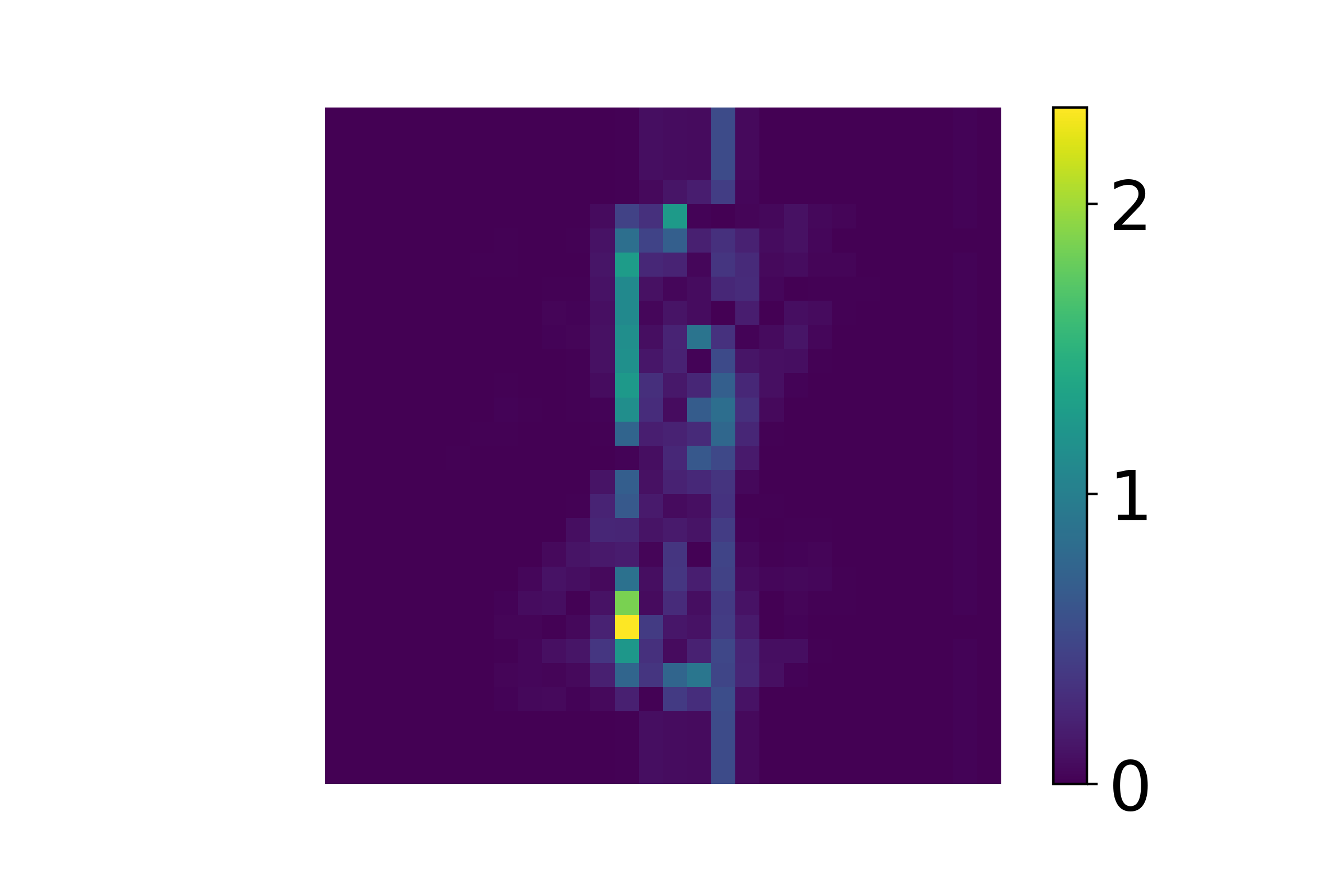}}{$|\Tcal - \Pcal_{1}|$}
    \end{subfigure}
        \caption{Illustration of applying local t-SVDM classification algorithm on MNIST database. We compute the t-SVDM of two class tensors $\Acal_0$ (representing digits consisting of 0) and $\Acal_1$ (representing digits consisting of 1), both with dimension $(\textup{x}, \textup{trials}, \textup{y}) = (28, 100, 28)$. Bases $\Ucal_{0,2}$ and $\Ucal_{1,2}$ are generated by class 0 and class 1 respectively.  $\Tcal$ represents the test image, which belongs to class $\Acal_1$. We project $\Tcal$ onto the spaces spanned by $\Ucal_{0,2}$ and $\Ucal_{1,2}$ and obtain projections $\Pcal_0$ and $\Pcal_1$ respectively. Absolute difference images $|\Tcal - \Pcal_0|$ and $|\Tcal - \Pcal_1|$ are generated by the absolute pixel difference between $\Tcal$ and $\Pcal_0$, and $\Tcal$ and $\Pcal_1$.}
        \label{fig:intuition}
    \end{figure}
\end{center}

\Cref{fig:intuition} provides an illustration of how classification via local tensor SVD (\Cref{sec:localTSVD}) is accomplished using MNIST data. Here, we select a small truncation value $k = 2$ to build a basis and utilize the t-product (or Discrete Foureir Transform) as our transformation. $\Ucal_{0,2}$ and $\Ucal_{1,2}$ both look very similar to the original digits, which capture the features of digits. $\Ucal_{0,2}$ exhibits the roundness of digit 0, and $\Ucal_{1,2}$ shows the vertical characteristics of digit 1. Shift of digits is also caught in the truncated basis because of our specific transformation choice.
Since $\Pcal_0$ is the projection of the test image $\Tcal$ onto the space spanned by the lateral slices of $\Ucal_{0,2}$, we visually observe that the projection is blurred, and it seemed to have characteristics of both digit 0 and digit 1. Conversely, $\Pcal_1$ is the projection of  $\Tcal$ to $\Ucal_{1,2}$ (the true class to which it belongs), and we see that $\Pcal_1$ only retains the characteristics of digit 1. 
Consequently, the test image looks the most similar to $\Pcal_1$, which also means that the underlying formation of the test image comes from the $\Ucal_{1,2}$ basis, and thus can be classified as digit 1. 

Numerically, using \Cref{def:frobenius}, the classification procedure makes the following calculation: $\|\Tcal - \Pcal_0\|_F\approx 0.0263 > \|\Tcal - \Pcal_1\|_F \approx 0.0089$. From this, we would categorize the test image as a digit 1. One can also make similar qualitative conclusions by visually observing $|\Tcal - \Pcal_0|$ and $|\Tcal - \Pcal_1|$. The bright pixels seen in $|\Tcal - \Pcal_0|$ illustrate stark differences in pixel values, whereas the more consistent dark coloring in $|\Tcal - \Pcal_1|$ indicates that the pixels are more similar.

\section{Numerical Experiments}
\label{sec:numericalExperiments}

Our numerical experiments aim to understand how our algorithm performs for our classification task under different choices of transformations and different truncations of the basis elements.
We include an overview of the choices of transformation $\bfM$ that we experiment with in \Cref{sec:parameters} and describe our application of the algorithm and its performance on fMRI data in \Cref{sec:starplus}. 
Related code can be found at~\url{https://github.com/elizabethnewman/tensor-fmri}. 

\subsection{Choices of $\mathbf{M}$} 
\label{sec:parameters}

The choice of $\starM$-product transformations can significantly impact the extracted features of the tensor. In our experiments, we implement three invertible transformation matrices to three dimensions when computing the $\starM$-product in t-SVDM. 
\begin{itemize}
    \item \textbf{Discrete Fourier Transform (t-product):} Based on the work in \cite{KilmerMartin2011}, we apply the one-dimensional fast Fourier transform \cite{CooleyJamesW} along a mode of $\Acal$. This specific choice of transformation can decompose signals to several separate frequencies and capture the sensitive shift of matrices in the spatial dimension. 
    \item \textbf{Discrete Cosine Transform (c-product):} Based on the work in~\cite{KernfeldKilmerAeron2015},  we apply the one-dimensional discrete cosine transform~\cite{fct} along a mode of $\Acal$. This is a more efficient implementation than explicitly forming the full discrete cosine transform matrix. 
    By transforming the signal from the spatial and temporal domain to the frequency domain, the DCT helps to separate data into parts of different importance. For this reason, the DCT is also often used in image compression domains \cite{dct-image}\cite{DBLP:journals/corr/RaidKEA14}.  
    \item \textbf{Haar Matrix:} Introduced in \cite{haar}, the Haar wavelet transformation is adept at dealing with abrupt transitions of signals \cite{haar_char}. Our implementation adopts the normalized version to ensure the matrix is orthogonal. One limitation of our implementation is that the dimensions of our matrix must be a power of two, i.e., $2^n \times 2^n$ for $n=1,2,\dots$.
    \item \textbf{Banded Matrix:} We apply the lower-triangular banded matrix defined in \cite{malik2020tensor}. This matrix was originally proposed for dynamic graphs, or graphs in which the nodes are fixed but edges and features can change in time. Since fMRI images are related at adjacent time points, the banded matrix could be an appropriate $\mathbf{M}$ for the temporal dimension of our data. 
    \item \textbf{Random Orthogonal Matrix:} We construct the random orthogonal matrix by retrieving the orthogonal matrix from the QR decomposition of a $n \times n$ matrix with entries sampled from a univariate Gaussian distribution of random floats with mean 0 and variance 1. No data structure is assumed when applying the random orthogonal matrix, giving it little advantage over other transformations that do assume and incorporate some kind of structure. However, we still choose to incorporate this matrix into our experiments and compare its performance to that of other choices of $\bfM$.
    \item \textbf{Data Dependent Matrix:} For a transformation along the $k$-th dimension, the data dependent matrix $\bfM_{k}$ is computed via the following:
        \begin{align*}\label{eq:dataDependentMatrix}
            \Acal_{(k)} = \bfU \bfSigma \bfV^\top \text{ and } \bfM_{k} = \bfU^\top
        \end{align*} 
    The advantage of this $\bfM_{k}$ is that it can capture structure specific to the data, despite not having the same physical features with other choices of $\mathbf{M}$. Such matrices are often used for efficient representations in transformations including Higher Order Singular Value Decomposition (HOSVD) \cite{KoldaBader2009:tensorBackground}. 
\end{itemize}

\subsection{fMRI Results}
\label{sec:starplus}
In this section, we provide our results when applying the t-SVDM classification procedure using various combinations of parameters to a classification task utilizing the StarPlus fMRI dataset. These results demonstrate the superiority of the t-SVDM method compared to the best possible equivalent matrix-based method, and also illustrate how incorporating knowledge about the data (such as time series information or annotated regions of interest) into our choice of transformation can impact performance.

\subsubsection{Data Setup}
\label{sec:Datasetup}

For our experiments, we use the StarPlus fMRI dataset \cite{starplus}, which is a publicly available dataset from Carnegie Mellon University's Center for Cognitive Brain Imaging. The StarPlus fMRI dataset is organized as follows: for a single human subject, 80 trials are completed where each trial corresponds to the subject either reading a sentence or viewing a picture. Each trial is composed of a series of fMRI scans  over a period of 16 time intervals spaced out over 500 milliseconds. The fMRI scan taken at each time point is three-dimensional, with 8 axial slices that are 64 by 64 pixels. For additional information about the conditions under which the trials were obtained, see \cite{starplus}. 
The three spatial dimensions, time dimension, and trials are concatenated to form a single five-dimensional tensor, which is visualized in \Cref{fig:data_visualization}. 

\begin{figure}[h]
    \centering
    \includegraphics[width=0.5\textwidth]{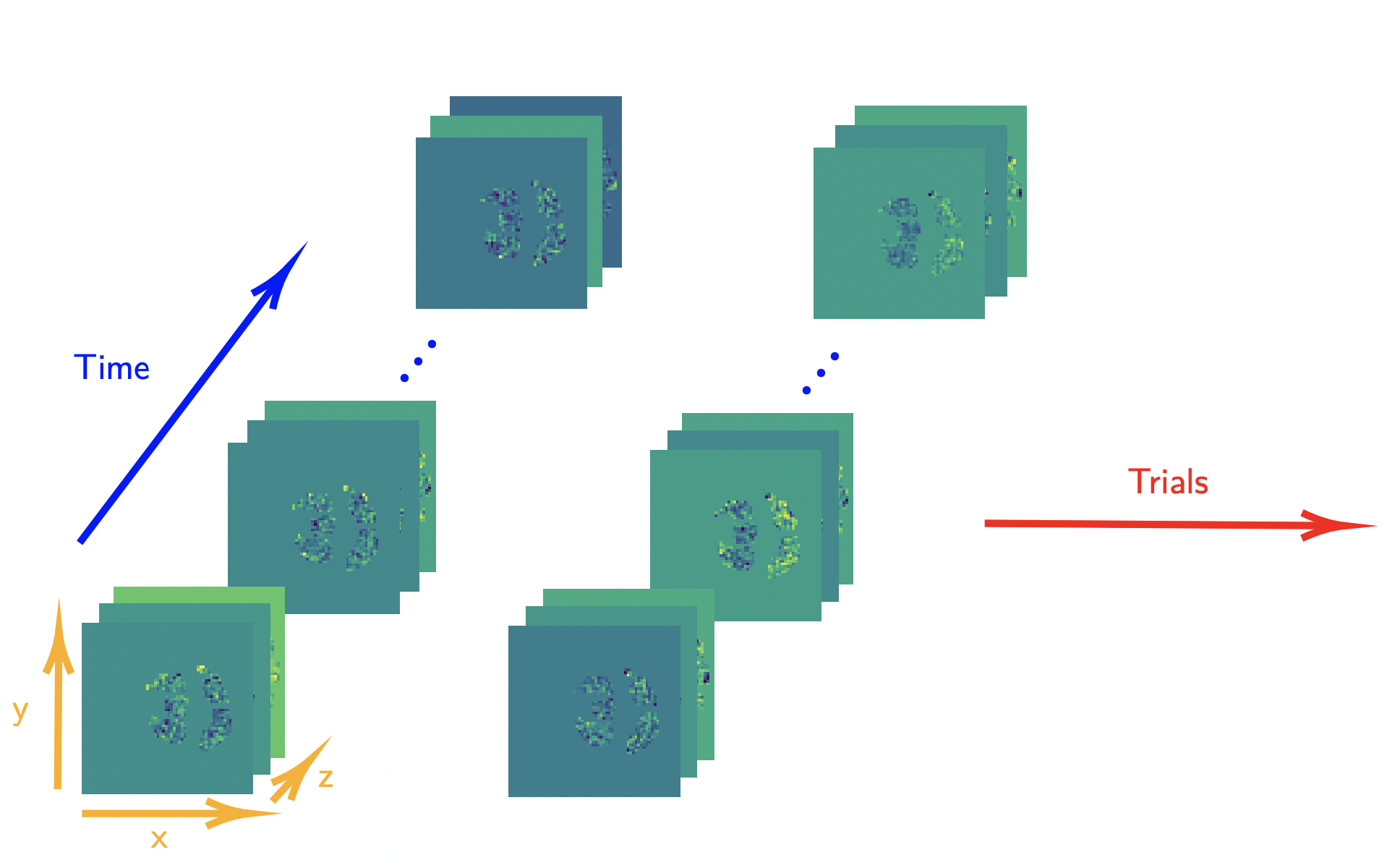}
    \caption{Visualization of data with dimensional shape (\textcolor[rgb]{1,0.68,0}{x}, \textcolor[rgb]{1,0.68,0}{y}, \textcolor[rgb]{1,0.68,0}{z}, \textcolor[rgb]{0,0.11,1}{time}, \textcolor[rgb]{1,0,0}{trials}) }
    \label{fig:data_visualization}
\end{figure}

We orient the tensor such that the the trials are indexed in the second dimension, giving us the following new dimensions: $(\text{x}, \text{trials}, \text{y}, \text{z}, \text{time}) = (64, 480, 64, 8, 16).$ Similar to how a sample of data is usually represented as a vector, we permute the tensor such that the second dimension contains trial information, allowing each trial to be stored as a lateral slice (analogous to vectors or columns) of the data tensor.

The comparable matrix-based approach would be to vectorize (\Cref{def:vectorize}) our high-dimensional data into a matrix. This is accomplished by unraveling all of the data corresponding to a single trial into one long column. These columns are then placed side-by-side to form a two-dimensional matrix where each column contains all of the spatial and time information for a single trial. 
The vectorized matrix shape is $(\textup{x} \times \textup{y} \times \textup{z} \times \textup{time}, \textup{trials}) = (64\times64\times8\times16, 480) = (524288,480)$ 

\subsubsection{Test Accuracy Results}
\label{sec:full_subject}
In this experiment, we use data from all six human subjects provided in the StarPlus dataset, resulting in a total of 480 trials. From these trials, we split the data such that 67\% of the trials are used for training and the remaining 33\% of the trials are used as test data. We divide the training trials based on labels so that we can construct two class tensors and compute local bases as described in  \Cref{sec:localTSVD}. Test data is stored as a tensor, and we project each lateral slice to local bases we produced on the last step. 
 
The matrix method would involve vectorizing the images as described earlier, computing the local SVD, and using the matrix version of projection and distance metric to make classification. We monitor the performance of our classification procedure by measuring how many test trials it was able to correctly classify. To compute this test accuracy, we calculate
\begin{align*}
    \text{test accuracy = }\frac{\text{number of correctly classified images}}{\text{number of images}}.
\end{align*}
\Cref{fig:prodMethod3} illustrates the relationship between the number of basis elements and the test accuracy for various $\bfM$. 
The main takeaways, as described below, are that our tensor-based method demonstrates superior performance over the traditional matrix-based approach; different transformation matrices provides different accuracies; the optimal truncation parameter also impacts results.

\paragraph{Tensor or Matrix} We know from \Cref{sec:tsvdm} that representing high-dimensional data as a tensor is provably optimal to its vectorized matrix form. Our results quantitatively illustrate that this optimal tensor representation also carries over into classification performance. 
    The tensor method outperforms the matrix method in terms of test accuracy in the following ways. First, with appropriate choice of transformation as will be described in the following point, the test accuracy is consistently higher than the matrix method for all choices of $k$. Second, we observe that tensor-based methods tend to start off well with a  limited number of basis elements, despite having the same storage cost for both matrix and tensor methods. Another computational benefit can be seen in \Cref{alg:tsvdm}, which shows that the t-SVDM offers potential for parallelization as well as only computing the SVD on relatively small frontal slice. 
\paragraph{Specific choices of $\bfM$} Selected choices of $\bfM$ are discussed in detail here to demonstrate how they each influence the test accuracy.
    \begin{enumerate}
        \item \textbf{Facewise Product:} The facewise product multiplies frontal slices of the data and is not designed to account for spatial and temporal change within the fMRI images. This suggests why its test accuracy is among the lowest. 
        
        \item \textbf{Haar-Banded:} When we use the banded matrix for the temporal dimension and the Haar matrix for all other transformations, we observe a consistently higher test accuracy than using the Haar transformation for all dimensions. This makes sense since there is a time relationship in fMRI data that we can exploit.
        This illustrates the advantages of choosing different $\bfM$ that are optimal for the nature of information being stored in a specific dimension.
        
        \item \textbf{Discrete Fourier Transform (t-product):} Since our fMRI data have similar slices among adjacent temporal dimensions as well as spatial dimensions, the t-product leads to the highest accuracy. 
    \end{enumerate}
\paragraph{Low vs. High Representation Power} The larger the number of basis elements, the more expressive our truncated basis $\Ucal_{i,k}$ will be. However, an overly expressive basis may represent classes equally well. Likewise, keeping too few basis elements could result in our basis $\Ucal_{i,k}$ failing to properly represent the most important features for a particular class. For example, as shown in \Cref{fig:prodMethod3}, some choices of transformations including Haar matrix and c-product demonstrate a trend of a brief increase followed by a decrease once the number of basis elements becomes too high.

\begin{figure}[h]
    \centering
    \includegraphics[width=0.8\linewidth, trim={0cm 0cm 0cm 2cm},clip]{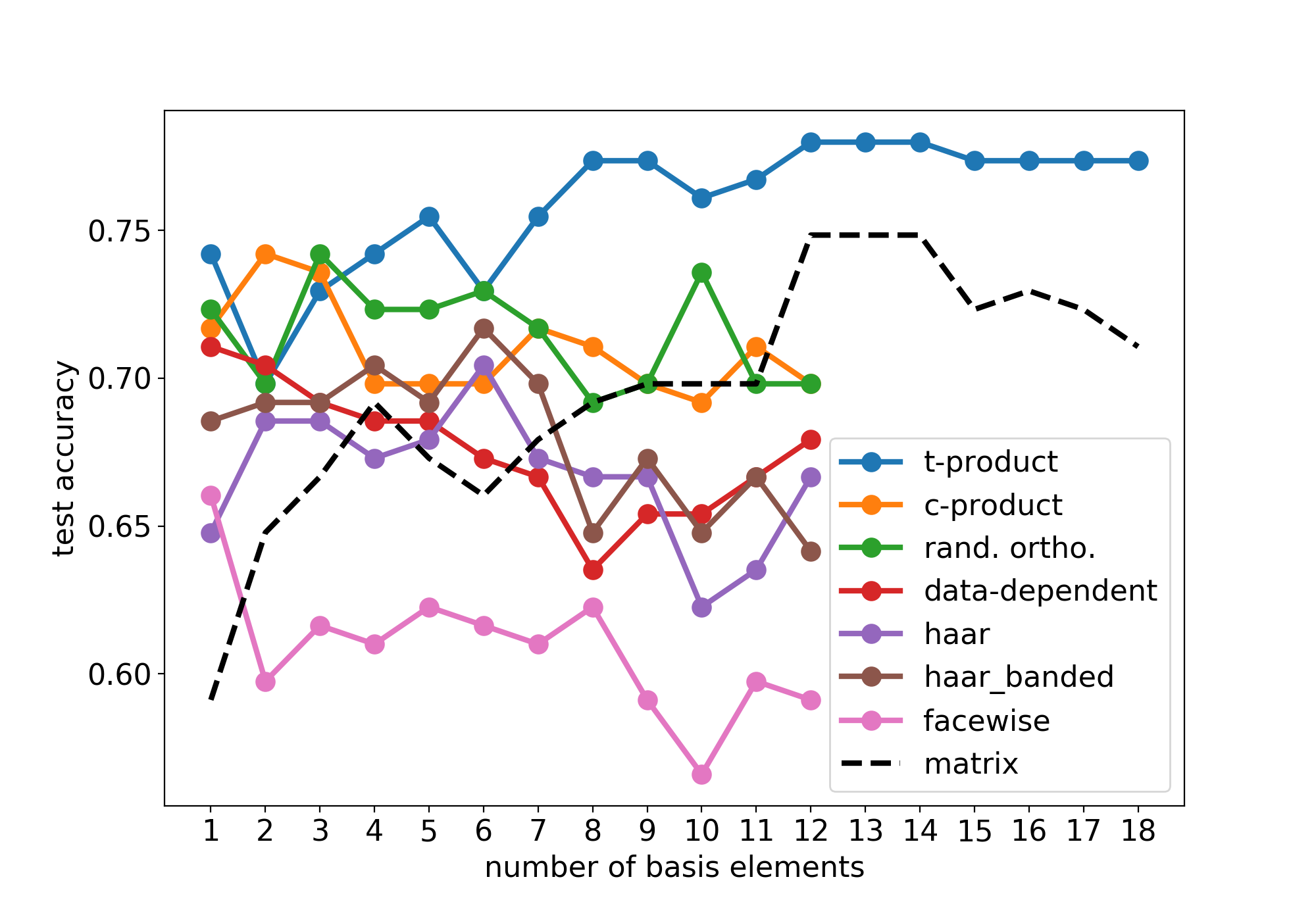}
    \caption{Test accuracy with respect to number of basis elements with varying $\bfM$. All methods are implemented for $k = 1,\dots,12$. We see a rise of the accuracy for the matrix method when $k$ approaches 12, so we extend $k$ to 18 for the matrix method. To give a complete comparison, the t-product is also run for $k = 1,\dots,18$. Most tensor-based methods implement the same $\bfM$ for all three transformed dimensions, except for the Haar-banded tensor method which implements banded matrix in the temporal dimension, and Haar matrix in the other two dimensions.}
    \label{fig:prodMethod3}
\end{figure}

\subsubsection{Utilizing Regions of Interest}
\label{sec:roi}

The StarPlus dataset is marked with 25-30 regions of interest, or ROIs, corresponding to anatomically defined regions of the brain. \Cref{fig:roi} shows an example of how the ROIs are provided for each image, with points on the colorbar associated with specific ROIs in the brain. For more information on the meanings of these abbreviated ROIs, please see \Cref{sec:roi_appendix}.

\begin{figure}[h]
    \centering
    \includegraphics[width=0.6\textwidth]{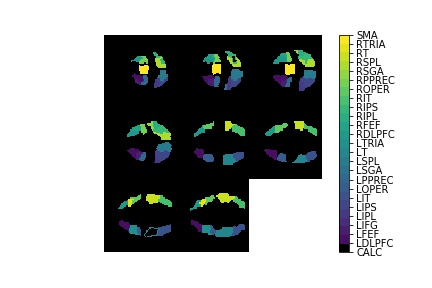}
    \caption{Regions of interest for 3D fMRI scan at single time point}
    \label{fig:roi}
\end{figure}

 To determine if incorporating knowledge about these anatomically-defined regions of interests impacts classification performance, we develop a transformation matrix that is formulated using the provided ROI data markups. To form this transformation, we use the following process. Let $\Acal\in \Rbb^{n_1\times n_2\times \dots \times n_p}$ and let $\Rcal \in \Rbb^{n_1 \times n_2\times \dots \times n_p}$. Each entry of $\Rcal$ is an integer that indicates the region of interest at that voxel. 

Let $m_k = n_1n_2\cdots n_{k-1}n_{k+1} \cdots n_p$ be the number of columns of the unfolded tensor $\Rcal_{(k)} \in \Rbb^{n_k\times m_k}$.  
Let $1 \le j_1 < j_2 < \dots < j_q \le m_k$ be the set of indices of columns of $\Rcal_{(k)}$ that contain a particular ROI label. 
Recall that since the columns of $\Rcal_{(k)}$ are vectorized mode-$k$ fibers, these columns contain fibers that cross through the desired region of interest. 
Form the ROI selection matrix $\bfP^{\rm ROI}_k \in \Rbb^{m_k\times q}$ that selects the columns of $\Rcal_{(k)}$ that contain the desired region of interest via matrix multiplication from the right; that is, $\Rcal_{(k)}\bfP^{\rm ROI}_k \in \Rbb^{m_k\times q}$.  
Here, each column of $\bfP^{\rm ROI}_k$ contains columns from an $m_k\times m_k$ identity matrix:
    \begin{align*}
        \bfP^{\rm ROI}_k(:,\ell) = \bfe_{j_\ell} \quad \text{for} \quad \ell=1,\dots, q.
    \end{align*}

To form the ROI data-dependent matrices, we take the following steps for $k=n_3,n_4, \dots, n_p$:
    \begin{align*}
        \Acal_{(k)}^{\rm ROI} &= \Acal_{(k)}\bfP^{\rm ROI}_k=\bfU\bfSigma\bfV^\top & \bfM_k^{\rm ROI} &= \bfU^\top.
    \end{align*}

We use the ROI-dependent transformation to transform the data into a domain that is created from the anatomical structure according to a specific brain region. We hypothesize that the ROIs from which a better-performing transformation is created might also be regions that are known to be associated with vision or language. We repeat each of these experiments for three human subjects. \Cref{fig:roi_results} displays results obtained when using an ROI-dependent $\bfM$.

\begin{figure}[H]
    \centering
    \begin{subfigure}{0.32\textwidth}
    \includegraphics[width=\textwidth]{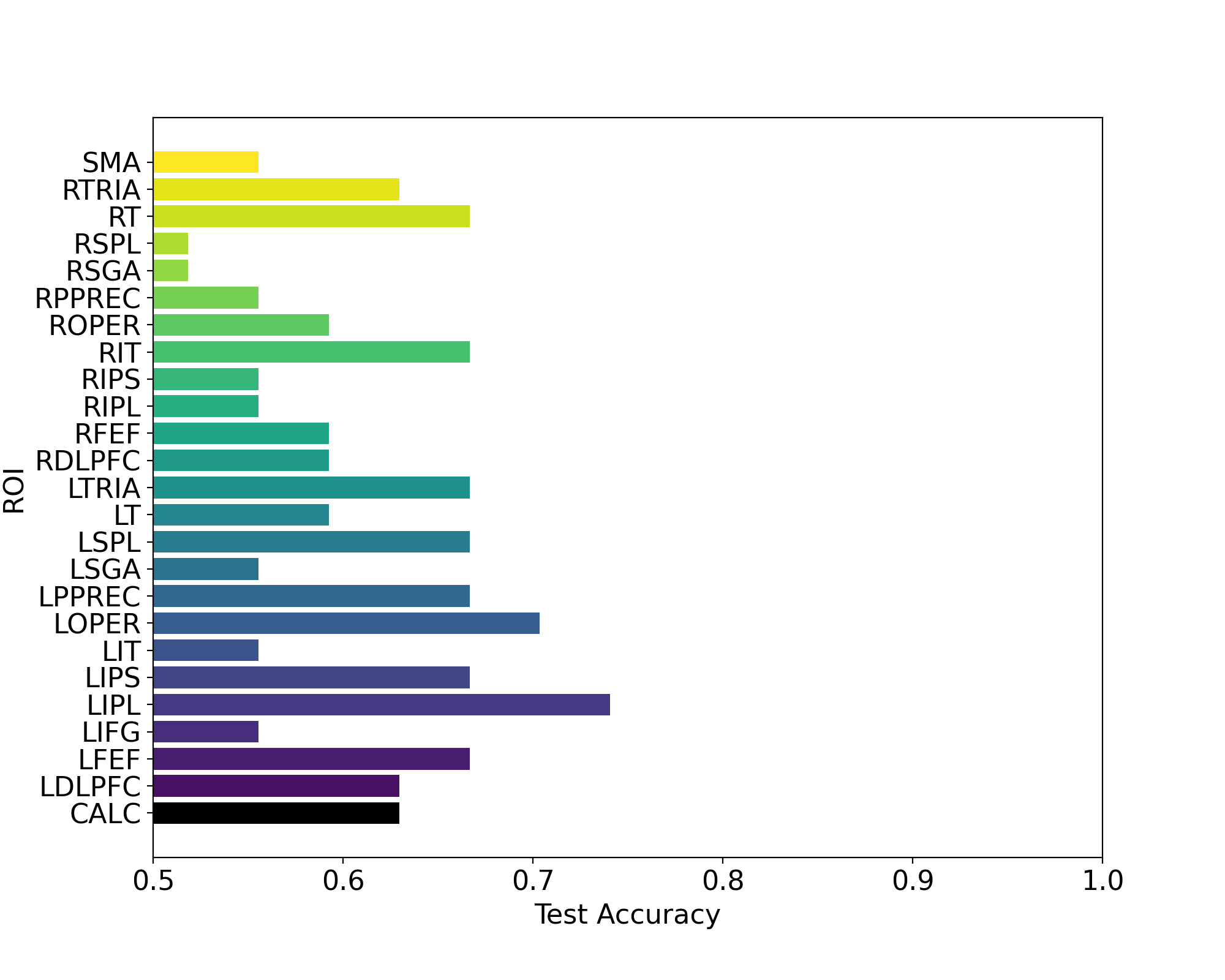}
    \caption{Subject 1}
    \end{subfigure}
    \hfill
    \begin{subfigure}{0.32\textwidth}
    \includegraphics[width=\textwidth]{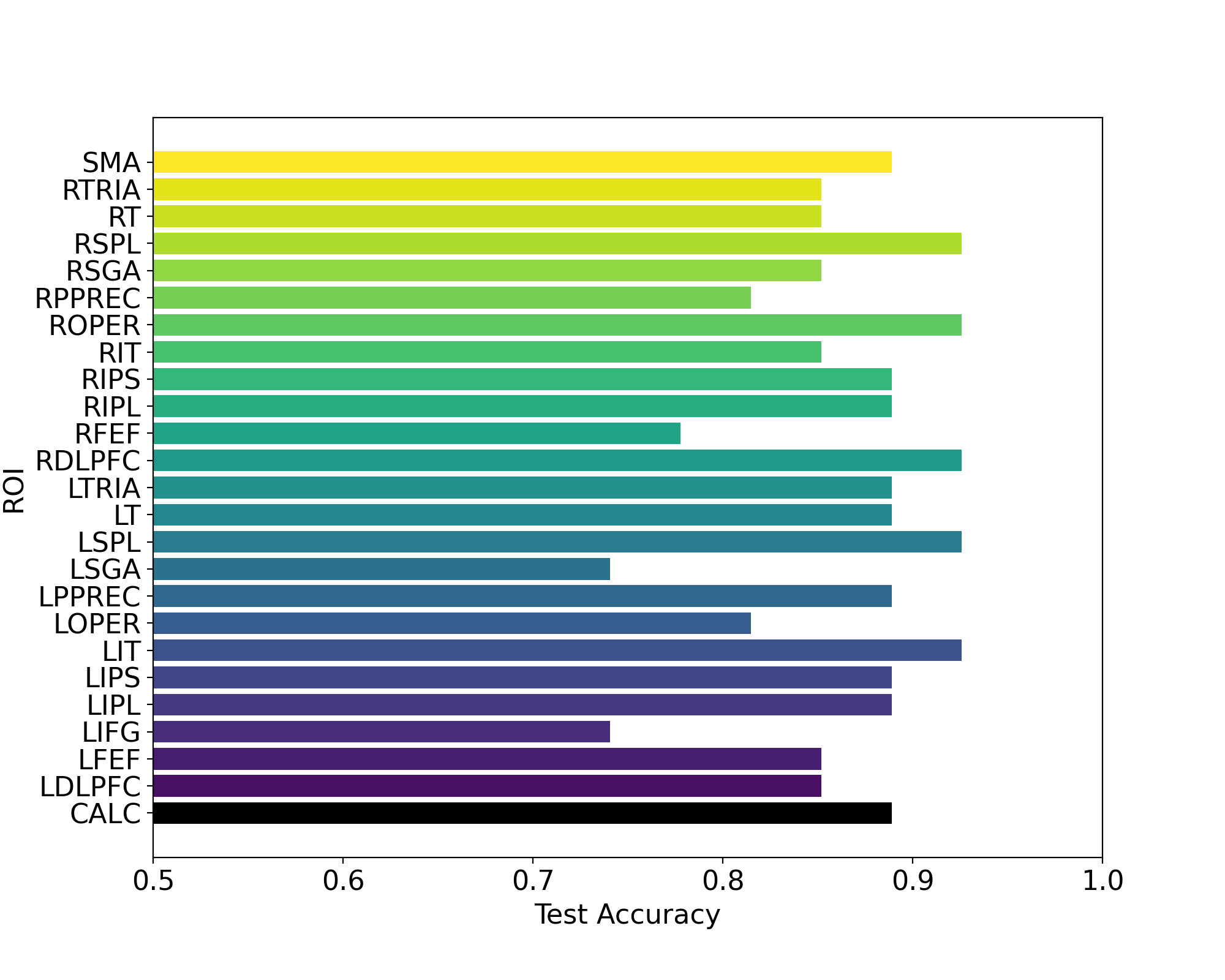}
    \caption{Subject 2}
    \end{subfigure}
    \hfill
    \begin{subfigure}{0.32\textwidth}
    \includegraphics[width=\textwidth]{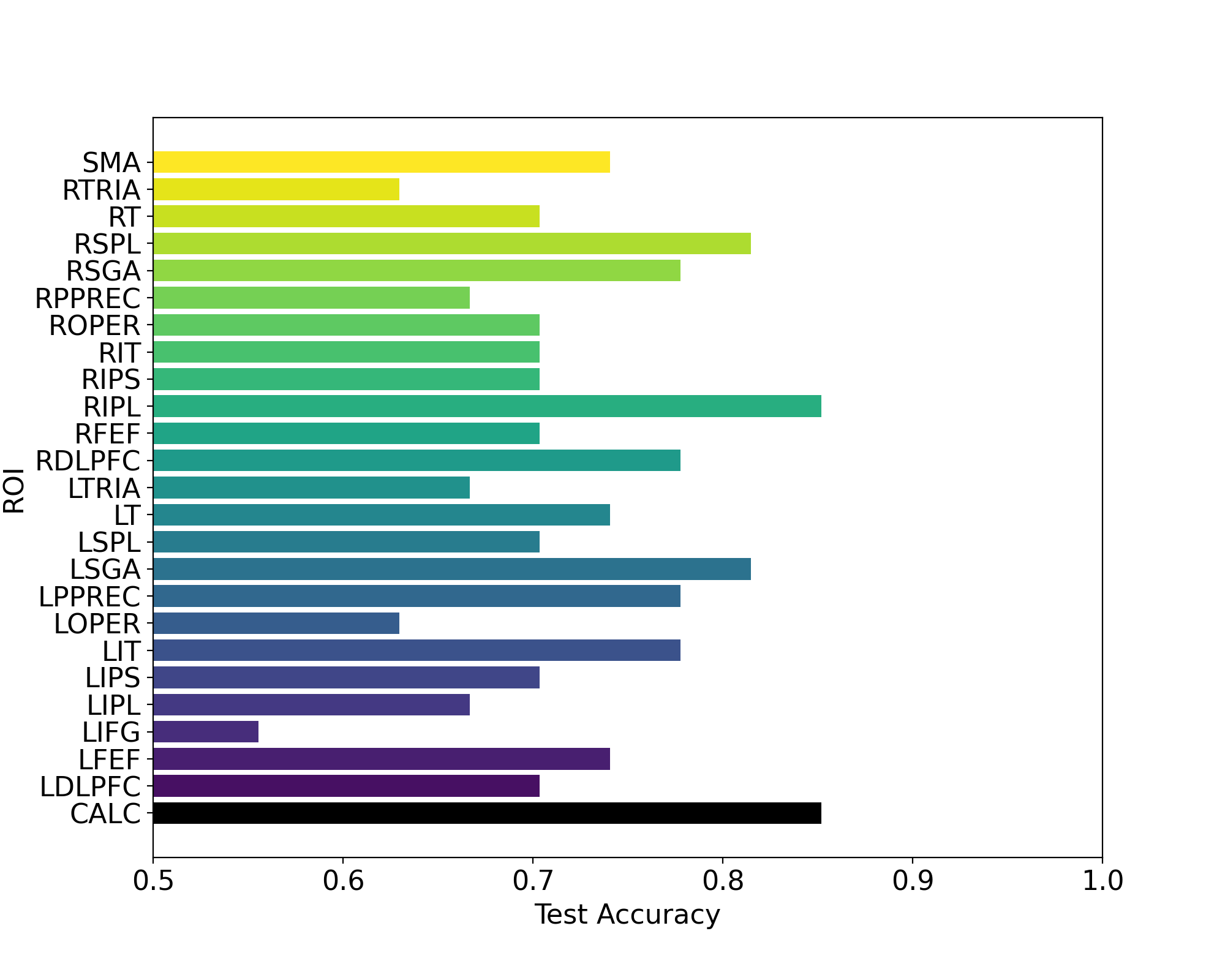}
    \caption{Subject 3}
    \end{subfigure}
    \caption{Results when using ROI-dependent $\bfM$ for $\bfM_{3}, \bfM_{4}$, and $\bfM_{5}$ with $k = 4$ for all dimensions}
    \label{fig:roi_results}
\end{figure}

From these results, we can see that there do exist choices of ROI's from which we can construct a $\bfM$, it is possible that offers improved classification accuracy over some of the choices of $\bfM$ illustrated in \Cref{fig:prodMethod3}. For example, for Subjects 2 and 3, there are multiple ROIs for which test accuracy exceeds 90\%, an accuracy which was never reached in any of our earlier experiments. However, it is also clear that there does not exist a specific ROI that consistently improves the accuracy across all subjects. Note that we only construct the ROI-dependent $\bfM$ from the data of a single subject. Therefore, it makes sense that the performance using that $\bfM$ will vary drastically from subject to subject.

Our interpretation of these results is that the regions that are most impactful for classification must be different depending on the human subject. Since individuals might cognitively experience the tasks of reading a sentence or viewing a picture differently, making generalizations about how all humans process these kinds of information is difficult. This observation gives insight into the challenges of constructing a t-SVDM approach that is sufficiently generalized to work for various brains but specific enough to produce accurate predictions. 

It is also possible that our dataset, while large in the sense that numerous trials are conducted for each subject, provides information from too few subjects. This would make it difficult to identify brain regions that would be universally impactful in the classification process. After all, six human subjects are hardly a representative sample of how all human brains work. There may actually be underlying universal similarities that could be detected by utilizing a dataset with more subjects. On the other hand, the variability introduced through the inclusion of more human subjects may make it more difficult for our multilinear approach to construct a good local basis.

Another potential culprit for the dramatic differences in accuracy among subjects could be some registration issues inherent in the StarPlus data. For example, if the MRI machine is oriented slightly differently the day that data was being collected for Subject 2 than it was for Subject 1, then our framework would not be able to fully execute the classification task to its best potential. This could perhaps be remedied by utilizing image registration techniques \cite{10.3389/fnins.2019.00909} to standardize our data.

We also recognize that our inability to construct an ROI-dependent transformation that produces consistently improved results across all subjects may simply represent an intrinsic limitation of our method. Brain activity is inherently nonlinear, produced through the discontinuous firing of neurons, and so the data that is obtained through monitoring this brain activity using fMRI would be representing nonlinear structure. Our methods in this study rely only on t-linear transformations and are not able to fully capture the nonlinear data structure. 

\section{Conclusions and Future Work}
\label{sec:conclusions}

Based on tensor notations and $\starM$-product, we extend the t-SVDM framework to $p$-dimensional tensors and use this to describe a local truncated t-SVDM approach for image classification, which can theoretically be applied to any high-dimensional labeled datasets. In our numerical experiments, we have been able to show that there does exist a t-SVDM approach that outperforms the best equivalent matrix-based SVD approach in terms of test accuracy, which quantitatively demonstrates the advantage of using tensor methods that preserve multilinear structure. Moreover, we find drastic differences in accuracy depending on the choices of transformation matrices, which encourages intentional product selections and requires understanding of the data. We also explore the implications of region of interests in the brain.
Our success could further the development of future tensor-based approaches for classification that are better able to accommodate the complexity of high-dimensional data.


We acknowledge that while we have been able to achieve success with applying the t-SVDM to the StarPlus dataset, there are some intrinsic mathematical limitations that may have inhibited its performance. 
First, we only extract multilinear features via the t-SVDM. While this is an improvement over extracting linear features via the matrix SVD, fMRI data may have more complex relationships (e.g., nonlinear) that our framework cannot easily exploit.
This could be remedied by combining our approach with a nonlinear classification method, such as neural networks. Second, our method is orientation-dependent: we treat the frontal slices differently than the other slices, hence the way we orient the data is crucial.
However, while other tensor-based approaches are orientation-independent, they do not have a natural analogy to projections like the $\starM$-framework does.
This motivates future methodological work of defining an orientation-independent approach in the $\starM$-framework (see \cite[Sec. 7]{kilmer2019tensortensor}) and a local bases classification approach for other tensor frameworks (e.g., Higher-Order SVD \cite{KoldaBader2009:tensorBackground} and Tensor-Train \cite{TensorTrainNeuralNetwork}) that are less dependent on orientation. Third, we focus on a binary classification task, only paying attention to the differences captured between how human brains perceive picture or sentences. Our proposed classification procedure is defined in \Cref{sec:localTSVD} for any number of classes, and so we are also interested in exploring how classification using the t-SVDM framework could be applied to tasks with more than two classes (e.g. more complex dataset described in \cite{RN8}).
Fourth, we hope to further develop these results to identify an approach that would be useful for medical diagnostic classification tasks. Using the local t-SVDM classification algorithm, we could analyze fMRI data for more significant medical challenges such as disease prediction and prevention.

\section*{Acknowledgments}
 This work was supported by the US National Science Foundation award DMS 2051019 and was completed during the ``Computational Mathematics for Data Science" REU/RET program in Summer 2021. We would like to thank Dr. Newman and the rest of the faculty supervising this REU/RET program for their guidance and input throughout this project.

\bibliographystyle{siamplain}
\bibliography{main.bib}

\appendix
\section{Appendix}
\begin{subsection}{Proofs for the Eckart-Young Theorem for Order-$p$ Tensors}
\label{sec:proofs_appendix}

\begin{lemma}[Tensor Orthogonal Invariance under $\starM$-product]
\label{lem:unitaryinvariance}
Let $\bfM_3 \in \mathbb{R}^{n_3 \times n_3 }, ..., \bfM_p \in \mathbb{R}^{n_p \times n_p }$ such that each $\bfM_{i}$ is an invertible non-zero scalar multiple of an orthogonal matrix $\bfW_{i}$ with scalars $c_i \in \mathbb{R}$ and $\starM$-orthogonal $\Qcal \in \mathbb{R}^{n \times n \times \dots \times n_{p}}$ . 
Then, for  $\Acal \in \mathbb{R}^{n \times \ell \times \dots \times n_{p}}$, we have $\| \Qcal \starM \Acal \|_{F} = c \| \Acal \|_F $, with $c \in \mathbb{R}$. 
Likewise, if $\Acal \in \mathbb{R}^{\ell \times n \times \dots \times n_{p}}$, $\| \Acal \starM \Qcal \|_{F} = c \| \Acal \|_F $.
\end{lemma}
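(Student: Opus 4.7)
The plan is to reduce the statement to the matrix case by working in the transform domain and exploiting the facewise structure of the $\starM$-product. By \Cref{def:starm}, $\Qcal \starM \Acal = \bigl(\hat{\Qcal} \triangle \hat{\Acal}\bigr) \times_3 \bfM_3^{-1} \times_4 \cdots \times_p \bfM_p^{-1}$, so it suffices to understand how applying the mode-$k$ products by $\bfM_k$ (and their inverses) scales the Frobenius norm, and then to analyze the facewise product $\hat{\Qcal} \triangle \hat{\Acal}$ slice by slice. Since $\bfM_k = c_k \bfW_k$ with $\bfW_k$ orthogonal, a direct computation using the mode-$k$ unfolding shows $\|\Acal \times_k \bfM_k\|_F = |c_k|\, \|\Acal\|_F$; iterating over $k = 3,\dots,p$ gives $\|\hat{\Acal}\|_F = \bigl(\prod_{k=3}^p |c_k|\bigr) \|\Acal\|_F$, and similarly for the inverses with factors $|c_k|^{-1}$.

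Next I would use $\starM$-orthogonality of $\Qcal$ to control $\hat{\Qcal} \triangle \hat{\Acal}$. Moving the identity $\Qcal^\top \starM \Qcal = \Ical$ into the transform domain and invoking \Cref{def:identity} and \Cref{def:transpose}, each frontal slice satisfies $\hat{\Qcal}(:,:,i_3,\dots,i_p)^\top\,\hat{\Qcal}(:,:,i_3,\dots,i_p) = \bfI$ (up to the scalar factor inherited from $\bfM_k = c_k\bfW_k$; this is the bookkeeping point to verify carefully). Consequently, each frontal slice of $\hat{\Qcal}$ is a scalar multiple of a matrix with orthonormal columns, and ordinary matrix orthogonal invariance gives $\|\hat{\Qcal}(:,:,i_3,\dots,i_p)\, \hat{\Acal}(:,:,i_3,\dots,i_p)\|_F = \gamma\, \|\hat{\Acal}(:,:,i_3,\dots,i_p)\|_F$ for a common scalar $\gamma$ independent of the slice indices.

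Summing squares over all frontal indices then yields $\|\hat{\Qcal} \triangle \hat{\Acal}\|_F = \gamma\, \|\hat{\Acal}\|_F$, and applying the mode-$k$ products by $\bfM_k^{-1}$ for $k=3,\dots,p$ reintroduces factors of $|c_k|^{-1}$. Combining these scalings produces $\|\Qcal \starM \Acal\|_F = c\, \|\Acal\|_F$ with $c$ an explicit product of $\gamma$ and the $|c_k|$ factors; when each $\bfM_k$ is exactly orthogonal (i.e.\ $c_k = 1$) the result collapses to $c = 1$, the expected isometry. The right-multiplication statement $\|\Acal \starM \Qcal\|_F = c\,\|\Acal\|_F$ is obtained by the symmetric argument, either by repeating the slice-wise analysis using $\Qcal \starM \Qcal^\top = \Ical$, or by applying the already-proved left-multiplication case to $(\Acal \starM \Qcal)^\top = \Qcal^\top \starM \Acal^\top$ together with the fact that $\starM$-transposition preserves Frobenius norm (which itself follows from the same transform-domain argument).

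The main obstacle is the careful tracking of the scalar constants $c_k$ through the transform/inverse-transform steps and ensuring that $\starM$-orthogonality in the presence of non-unit scalar multiples of orthogonal matrices still translates into the slice-wise matrix identity needed in step two. Once this bookkeeping is done, the rest of the argument is a direct reduction to the well-known matrix Frobenius invariance under orthogonal multiplication.
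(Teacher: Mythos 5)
Your proposal is correct and follows essentially the same route as the paper: pass to the transform domain, use $\|\Acal\times_k \bfM_k\|_F = c_k\|\Acal\|_F$ for $\bfM_k = c_k\bfW_k$ to accumulate the scaling $c = c_3\cdots c_p$, apply matrix orthogonal invariance to each frontal slice of the facewise product, and undo the transform. The one point you flag as needing careful bookkeeping---whether the frontal slices of $\hat{\Qcal}$ are orthogonal only up to a scalar---resolves cleanly: by \Cref{def:identity} the transform-domain frontal slices of $\Ical$ are exact identity matrices regardless of the $c_k$, so $\hat{\Qcal}^\top\triangle\hat{\Qcal}=\widehat{\Ical}$ forces each slice of $\hat{\Qcal}$ to be genuinely orthogonal, giving $\gamma=1$ and making the forward and inverse scalings cancel to yield $c=1$, exactly as in the paper's computation.
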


\begin{proof}
Let $\Acal \in \mathbb{R}^{n \times \ell \times \dots \times n_{p}}$. 
First, we show that the norm of $\Acal$ is preserved (up to scalar multiplication) in the transform domain.
From \cite{kilmer2019tensortensor}, we know that $ \| \Acal \times_{3} \bfM_{3} \|_F =  c_{3} \| \Acal \|_F $.
This generalizes to the mode-$k$ product as follows.  
Assume $\bfM_k = c_k\bfW_k$ where $\bfW_k \in \Rbb^{n_k\times n_k}$ is orthogonal. 
Then,
    \begin{align*}
        \|\Acal \times_{k} \bfM_{k}\|_F 
            = \| \bfM_k\Acal_{(k)} \|_F = \| c_k\bfW_k\Acal_{(k)} \|_F=c_k\|\Acal_{(k)}\|_F = c_k\|\Acal\|_F.
    \end{align*}
If we apply multiple transformation matrices to each dimension from 3 to $p$, we obtain the following:
    \begin{align*}
        \| \hat{\Acal} \|_F &= \| \Acal \times_3 \bfM_3 \dots \times_p \bfM_p \|_F \\
        &= \| \bfM_{p} (  \Acal \times_3 \bfM_3 \dots \times_{p-1} \bfM_{p-1} )_{(p)} \|_F \\
        &=\| c_{p} \bfW_{p} (\Acal \times_3 \bfM_3 \dots \times_{p-1} \bfM_{p-1})_{(p)} \|_F\\
        &=c_p\|\Acal \times_3 \bfM_3 \dots \times_{p-1} \bfM_{p-1}\|_F\\
        &\vdots\\
        &=c\|\Acal\|_F\quad \text{where }c = c_{3} c_{4} \cdots c_p.
    \end{align*}

We now show the norm-invariance of the $\starM$-product when multiplying by an orthogonal tensor.
Let $\Qcal$ be orthogonal and $\Ccal = \Qcal \starM \Acal$. Then,
\begin{align*}
    \| \Acal \|_{F}^{2} &= \frac{1}{c^2 } \| \hat{\Acal} \|_{F}^{2} \\
    &= \frac{1}{c^2 } \sum_{i_3 = 1}^{n_3} \dots \sum_{i_p = 1}^{n_p} \| \Acal_{:,:,i_{3},...,i_{p} } \|_F^2\\
    &= \frac{1}{c^2 }  \sum_{i_3 = 1}^{n_3} \dots \sum_{i_p = 1}^{n_p} \| \hat{\Qcal}_{:,:,i_{3},...,i_{p} } \hat{\Acal}_{:,:,i_{3},...,i_{p} } \|_F^2  \textup{ since frontal slices of $\hat{\Qcal}$ are orthogonal} \\
    &=  \frac{1}{c^2 } \| \hat{\Ccal} \|_F \\
    &=  \| \Ccal \|_F .
\end{align*}
The other direction is similar. 
\end{proof}

\begin{lemma}[Ordering of Singular Tubes] 
\label{lem:orderSingularTubes}
Given the t-SVDM of $\Acal$ where $\Acal$ is of $t$-rank-$r$, we have $\| \Acal \|^{2}_F = \| \Scal \|^{2}_F = \sum_{i=1}^{r} \| \bfs_{i} \|^{2}_{F}$ where $\bfs_i$ is the $(i,i)$-tube of $\bfS$.  
Moreover, $\|\bfs_{1} \|_{F} \geq \|\bfs_{2} \|_{F} \geq \cdots \geq \|\bfs_{r} \|_{F} > 0$. 
\end{lemma}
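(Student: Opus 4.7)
The plan is to reduce both claims to the matrix SVD by passing through the transform domain, and to handle the scaling constants with Lemma~\ref{lem:unitaryinvariance}. I would split the argument into the norm identity and the monotonicity of the singular tubes.

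For the identity $\|\Acal\|_F^2 = \|\Scal\|_F^2 = \sum_{i=1}^{r}\|\bfs_i\|_F^2$, introduce the scalar $c = c_3 c_4 \cdots c_p$ from Lemma~\ref{lem:unitaryinvariance}, so that $\|\Acal\|_F = \frac{1}{c}\|\hat{\Acal}\|_F$ and $\|\Scal\|_F = \frac{1}{c}\|\hat{\Scal}\|_F$. By the definition of the t-SVDM, in the transform domain each frontal slice is a matrix SVD,
\[
\hat{\Acal}(:,:,i_3,\ldots,i_p) = \hat{\Ucal}(:,:,i_3,\ldots,i_p)\,\hat{\Scal}(:,:,i_3,\ldots,i_p)\,\hat{\Vcal}(:,:,i_3,\ldots,i_p)^\top,
\]
whose outer factors are orthogonal matrices. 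Standard orthogonal invariance of the matrix Frobenius norm then gives $\|\hat{\Acal}(:,:,i_3,\ldots,i_p)\|_F = \|\hat{\Scal}(:,:,i_3,\ldots,i_p)\|_F$; summing squares over every multi-index $(i_3,\ldots,i_p)$ yields $\|\hat{\Acal}\|_F = \|\hat{\Scal}\|_F$, and dividing by $c$ produces $\|\Acal\|_F = \|\Scal\|_F$. Since $\Scal$ is $f$-diagonal by construction and only $r$ of its diagonal tubes are nonzero (by definition of the t-rank), expanding $\|\Scal\|_F^2$ entrywise gives exactly $\sum_{i=1}^{r}\|\bfs_i\|_F^2$.

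For the ordering, I would do the comparison in the transform domain. The algorithm fills each slice $\hat{\Scal}(:,:,i_3,\ldots,i_p)$ with the singular values of $\hat{\Acal}(:,:,i_3,\ldots,i_p)$ in nonincreasing order along the diagonal, so $\hat{\Scal}(i,i,i_3,\ldots,i_p) \geq \hat{\Scal}(i+1,i+1,i_3,\ldots,i_p) \geq 0$ holds pointwise in $(i_3,\ldots,i_p)$. Squaring and summing over those indices yields $\|\hat{\bfs}_i\|_F^2 \geq \|\hat{\bfs}_{i+1}\|_F^2$, where $\hat{\bfs}_i$ denotes the $(i,i)$-tube of $\hat{\Scal}$. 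Applying Lemma~\ref{lem:unitaryinvariance} to each tube, regarded as a $1 \times 1 \times n_3 \times \cdots \times n_p$ tensor, gives $\|\bfs_i\|_F = \frac{1}{c}\|\hat{\bfs}_i\|_F$, so the common factor cancels and the inequality $\|\bfs_i\|_F \geq \|\bfs_{i+1}\|_F$ passes back to the spatial domain. Strict positivity $\|\bfs_r\|_F > 0$ is immediate from the definition of the t-rank as the number of nonzero singular tubes.

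I do not expect a genuine obstacle here: the only subtle step is tracking the scalar $c$ introduced by the mode-$k$ products, which cancels identically on both sides of every comparison because it enters symmetrically in the spatial and transform domains. Everything else reduces to applying the matrix SVD identity and its standard monotonicity slicewise.
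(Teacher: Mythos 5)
Your proof is correct and follows essentially the same route as the paper: pass to the transform domain, apply the matrix SVD slicewise with orthogonal invariance of the Frobenius norm, and track the scalar $c$ from Lemma~\ref{lem:unitaryinvariance}. The only difference is that you prove the identity $\|\Acal\|_F^2 = \|\Scal\|_F^2$ explicitly, whereas the paper cites it from the third-order case in the reference; your ordering argument is identical to the paper's.
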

\begin{proof}
We know that $\Ucal$ and $\Vcal^\top$ are $\starM$-orthogonal and the only entries of $\Scal$ lie along the diagonal of its frontal slices. By results shown in \cite{kilmer2019tensortensor}, we have
$$
\| \Acal \|^{2}_{F} = \| \Ucal \Scal \Vcal^\top \|^{2}_{F} = \| \Scal \|^{2}_{F} = \sum_{i=1}^{r} \| \bfs_{i} \|^{2}_{F} .
$$
We now prove the second part. From \Cref{lem:unitaryinvariance}, we have
$$
\| \bfs_{i} \|_{F}^2 = \frac{1}{c^2} \| \hat{\bf{s}}_{i} \|_{F}^2 =  \frac{1}{c^2} \sum_{i_3 = 1}^{n_3} \dots \sum_{i_p = 1}^{n_p}  \hat{\sigma}_{i,i,i_3 ,..., i_p}^{2}
$$
where $\hat{\sigma}_{i}^{(i_3 ,..., i_p ) }$ is the $i$-th largest singular value in the transform domain located at the frontal slice  along fixed indices $i_3 , ... , i_p $. By the definition of the matrix SVD, we know that $\hat{\sigma}_{i}^{(i_3 ,..., i_p )} \geq \hat{\sigma}_{i+1}^{(i_3 ,..., i_p ) },$ or that the singular values lie along the diagonal of the singular value matrix in descending order of magnitude. 
Thus, we have
$$
\| \bfs_{i} \|_F \geq \| \bfs_{i+1} \|_{F}.
$$
Hence, the norm of the singular tubes is ordered.
\end{proof}

The proof of the Eckart-Young-like \Cref{thm:eckartYoung} is provided below.
\begin{proof}
We first obtain a formulation for the error. We know that for the matrix SVD, we have $\|\bfA - \bfA_{k} \|_{F}^2 = \sum_{i= k+1}^{r} \sigma_{i}^2$ \cite{strang}. Thus, we have 
\begin{align*}
    \| \Acal - \Acal_{k} \|^{2}_{F} &= \frac{1}{c^2} \| \hat{\Acal} - \hat{\Acal}_{k} \|^{2}_{F} \\
    &=  \frac{1}{c^2} \sum_{{i_3} = 1}^{{n_3}} \dots \sum_{{i_p} = 1}^{n_p} \| \hat{\Acal}_{:,:,{i_3} ,..., {i_p} } - \hat{\Acal}_{:,1:k,{i_3} ,..., {i_p} }  \|^{2}_{F} \\
     &=  \frac{1}{c^2} \sum_{{i_3} = 1}^{n_3} \dots \sum_{{i_p} = 1}^{{n_p}} \sum_{i= k+1}^{r} \hat{\sigma}_{i,i,{i_3} ,..., {i_p} }^2 \quad \textup{(\Cref{lem:orderSingularTubes})} \\
    &= \frac{1}{c^2} \sum_{i= k+1}^{r} \| \bfs_{i} \|^{2}_{F}.
\end{align*}
Let $\Bcal$ be a tensor of t-rank-$k$. Then, using the fact that the SVD produces the best possible rank-$k$ approximation of a matrix, we have
\begin{align*}
    \| \Acal - \Bcal \|^{2}_{F} &= \frac{1}{c^2} \| \hat{\Acal} - \hat{\Bcal} \|^{2}_{F} \quad \textup{(\Cref{lem:unitaryinvariance})}\\
    &= \frac{1}{c^2} \| \reshape(\hat{\Acal},[n_1, n_2, n_3n_4\cdots n_p] ) - \reshape (\hat{\Bcal},[n_1, n_2, n_3n_4\cdots n_p]) \|^{2}_{F}\\
    &\geq \frac{1}{c^2} \| \reshape (\hat{\Acal},[n_1, n_2, n_3n_4\cdots n_p]) - \reshape(\hat{\Acal}_{k},[n_1, n_2, n_3n_4\cdots n_p]) \|^{2}_{F} \\
    &=  \| \Acal - \Acal_{k} \|^{2}_{F} .
\end{align*}

When we compute $\reshape(\hat{\Bcal},[n_1, n_2, n_3n_4\cdots n_p])$, we are concatenating the decoupled frontal slices in the transform domain into a third-order tensor.  
Thus, we can use the results from~\cite{kilmer2019tensortensor} and immediately obtain the inequality above.


\end{proof}
\end{subsection}

\begin{subsection}{Additional Descriptions for Regions of Interest}
\label{sec:roi_appendix}
To clarify some of the abbreviated ROIs referenced in \Cref{sec:roi}, we provide the full names of each of the ROIs in the below table.

\begin{center}
    \begin{tabular}{|l|l|}
    \hline
    Abbreviation & Name \\
    \hline
    \hline
    LTRIA/RTRIA & Left/Right Triangularis \\
    \hline
    LIT/RIT & Left/Right Inferior Temporal Lobe \\
    \hline 
    LIPS/RIPS & Left/Right Intraparietal Sulcus \\
    \hline
    LSGA/RSGA & Left/Right Supramarginal Gyrus \\
    \hline 
    LT/RT & Left/Right Temporal Lobe \\
    \hline
    LOPER/ROBER & Left/Right Opercularis \\
    \hline
    LSPL/RSPL & Left Superior Parietal Lobe \\
    \hline
    LPPREC/RPPREC & Left/Right Posterior Precentral Sulcus \\
    \hline
    LDLPFC/RDLPFC & Left/Right Dorsolateral Prefrontal Cortex \\
    \hline 
    CALC & Calcarine Sulcus\\
    \hline
    
    \end{tabular}
\end{center}

The above terms, along with more detailed explanations of each of these regions, can be found in \cite{roi}. Our initial hypothesis was that regions known to be associated with vision and language (e.g. the left temporal lobe, which is associated with language skills) might have a greater role in improving accuracy when utilized as an ROI-dependent $\bfM$ as described in \Cref{sec:roi}. 
\end{subsection}

\end{document}